\def\eqref#1{equation~\ref{#1}}
\def\1{\bm{1}}
\DeclareMathAlphabet{\mathsfit}{\encodingdefault}{\sfdefault}{m}{sl}
\SetMathAlphabet{\mathsfit}{bold}{\encodingdefault}{\sfdefault}{bx}{n}
\def\gE{{\mathcal{E}}}
\def\gG{{\mathcal{G}}}
\def\gN{{\mathcal{N}}}
\def\gO{{\mathcal{O}}}
\theoremstyle{plain}
\newtheorem{theorem}{Theorem}[section]
\newtheorem{corollary}[theorem]{Corollary}
\theoremstyle{definition}
\newtheorem{definition}[theorem]{Definition}
\theoremstyle{remark}
\newcommandx{\unsure}[2][1=]{\todo[linecolor=red,backgroundcolor=red!25,bordercolor=red,#1]{#2}}
\newcommandx{\change}[2][1=]{\todo[linecolor=blue,backgroundcolor=blue!25,bordercolor=blue,#1]{#2}}
\newcommandx{\info}[2][1=]{\todo[linecolor=OliveGreen,backgroundcolor=OliveGreen!25,bordercolor=OliveGreen,#1]{#2}}
\newcommandx{\improvement}[2][1=]{\todo[linecolor=Plum,backgroundcolor=Plum!25,bordercolor=Plum,#1]{#2}}
\newcommandx{\thiswillnotshow}[2][1=]{\todo[disable,#1]{#2}}
\definecolor{darkred}{rgb}{0.9, 0, 0.1}
\title[Efficient Neural Common Neighbor for Temporal Graph Link Prediction]{Efficient Neural Common Neighbor for Temporal Graph Link Prediction}
\author[X. Zhang et al.]{%
    Xiaohui Zhang\thanks{Equal Contribution} \\
    Institute for Artificial Intelligence, \\ Peking University \\
    \texttt{huihuang@stu.pku.edu.cn} 
    \And
    Yanbo Wang\footnotemark[1] \\
    Institute for Artificial Intelligence, \\ Peking University \\
    \texttt{wangyanbo@stu.pku.edu.cn} \\
    \And   
    Xiyuan Wang \\
    Institute for Artificial Intelligence, \\
    Peking University \\
    \texttt{wangxiyuan@pku.edu.cn} \\
    \And
    Muhan Zhang\thanks{Correspondence to: Muhan Zhang <muhan@pku.edu.cn>} \\
    Institute for Artificial Intelligence, Peking University \\
    State Key Laboratory of General Artificial Intelligence, Peking University \\
    \texttt{muhan@pku.edu.cn} \\
}
\begin{document}

\maketitle

\begin{abstract}
Temporal graphs are widespread in real-world applications such as social networks, as well as trade and transportation networks. Predicting dynamic links within these evolving graphs is a key problem. Many memory-based methods use temporal interaction histories to generate node embeddings, which are then combined to predict links. However, these approaches primarily focus on individual node representations, often overlooking the inherently pairwise nature of link prediction. While some recent methods attempt to capture pairwise features, they tend to be limited by high computational complexity arising from repeated embedding calculations, making them unsuitable for large-scale datasets like the Temporal Graph Benchmark (TGB).
To address the critical need for models that combine strong expressive power with high computational efficiency for link prediction on large temporal graphs, we propose Temporal Neural Common Neighbor (TNCN). Our model achieves this balance by adapting the powerful pairwise modeling principles of Neural Common Neighbor (NCN) to an efficient temporal architecture. TNCN improves upon NCN by efficiently preserving and updating temporal neighbor dictionaries for each node and by using multi-hop common neighbors to learn more expressive pairwise representations. 
TNCN achieves new state-of-the-art performance on Review from five large-scale real-world TGB datasets, 6 out of 7 datasets in the transductive setting and 3 out of 7 in the inductive setting on small- to medium-scale datasets. 
Additionally, TNCN demonstrates excellent scalability, outperforming prominent GNN baselines by up to 30.3 times in speed on large datasets. Our code is available at \href{https://github.com/GraphPKU/TNCN}{https:
//github.com/GraphPKU/TNCN}.
\end{abstract}

\section{Introduction}
Temporal graphs are increasingly employed in contemporary real-world applications like social and transaction networks, which evolve dynamically and exhibit distinct characteristics over time. Concurrently, Graph Neural Networks (GNNs)~\citep{scarselli2008graph} have emerged as prominent tools for graph representation learning, typically learning node embeddings by iteratively aggregating information from neighbors and demonstrating strong performance on various tasks. However, the temporal dimension introduces significant challenges; the discrete or continuous timestamps associated with graph edges define an evolutionary process and impose causality constraints, rendering many static GNN methodologies not directly applicable. This has led to the development of specialized temporal GNNs. Consequently, specialized temporal GNNs have been developed, often focusing on achieving both effective representation learning and computational efficiency for these large, dynamic structures. For instance, memory-based approaches like Temporal Graph Networks (TGN)~\citep{kumar2019predicting, trivedi2019dyrep, rossi2020temporal} are designed to efficiently learn short- and long-term dependencies, while Transformer-based models~\citep{wang2021tcl, xu2020inductive} utilize attention mechanisms to capture complex relationships.

\begin{figure*}[tp]
\vspace{-0.3cm}
    \centering
    \subfigure[a temporal graph]{
        \includegraphics[width=0.26\textwidth]{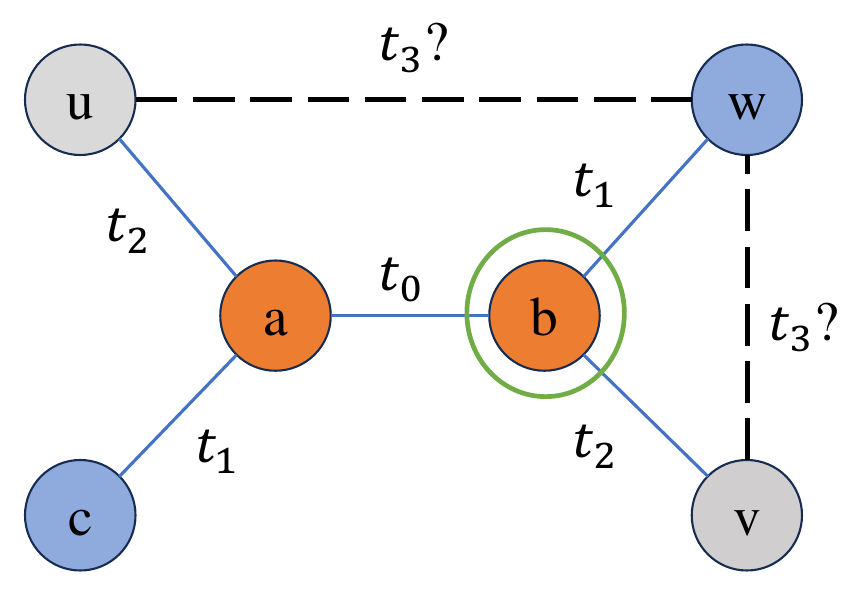}
    }
    \hspace{0.5cm}
    \subfigure[computation tree of $u$ and $v$]{
        \includegraphics[width=0.27\textwidth]{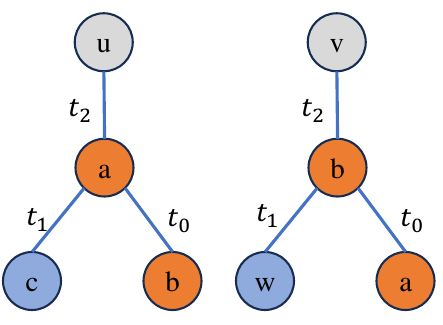}
    }
    \hspace{0.5cm}
    \subfigure[common neighbor method]{
        \includegraphics[width=0.32\textwidth]{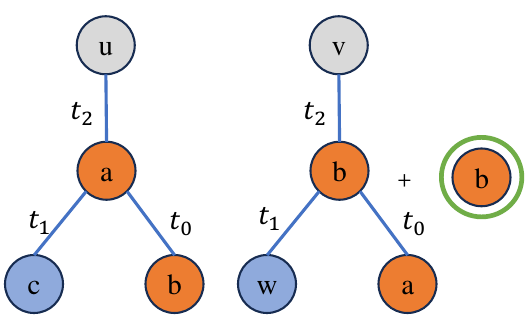}
    }
    \caption{Figure (a) shows a failure case of link prediction based on node-wise representation learning. Such methods cannot distinguish node $u$ and $v$ because they possess the same temporal computation tree in Figure (b), thus generating the same node representation. However, when we try to learn their pair-wise representation, i.e. $(u,w)$ and $(v,w)$, we can observe that $v$ has a temporal common neighbor $b$ with node $w$ while $u$ doesn't, as shown in Figure (c). Thus with the same computation graph, we only need to utilize the extra node $b$'s embedding to distinguish $(u,w)$ and $(v,w)$.}
    \label{fig:node-wise-repr}
    
    \vspace{-0.8cm}
\end{figure*}

Despite their advancements, many specialized temporal GNNs primarily generate node-wise representations. While effective for certain tasks, this focus can be insufficient for link prediction, where accurately capturing the relationship between pairs of nodes is critical. Node-wise methods may fail to distinguish between nodes that have similar individual features or local neighborhoods but different propensities to connect with a target node (as illustrated in Figure~\ref{fig:node-wise-repr}), thereby overlooking crucial relational patterns. Acknowledging such limitations, approaches from static graph link prediction, like the labeling trick~\citep{zhang2018link, zhang2021labeling} that emphasizes pair-wise representations, have shown considerable success. Consequently, researchers have extended these pair-wise learning concepts to temporal graphs, often by leveraging information from the evolving local neighborhoods of node pairs~\citep{wang2021inductive, luo2022neighborhood, yu2023towards}. However, these more expressive graph-based temporal models frequently incur substantial computational and memory costs, arising from the need to extract and process temporal neighborhood information for each prediction, which can hinder their application to large-scale scenarios.


This trade-off between computationally demanding expressive models and more efficient but potentially less informative ones highlights a critical need. Addressing this, we propose the \textbf{Temporal Neural Common Neighbor (TNCN)} model, which is designed to achieve both high efficiency and strong expressive power for temporal link prediction. TNCN builds upon a memory-based backbone, ensuring operational efficiency comparable to sequential update models. Crucially, it incorporates a Neural Common Neighbor component~\citep{wang2023neural}. This component is augmented with advanced operational techniques and extended for multi-hop common neighbor consideration, allowing TNCN to effectively model sophisticated link heuristics and learn detailed pairwise representations while retaining the efficiency of its memory-based foundation. As a result, TNCN is well-suited for large-scale temporal graph link prediction.


We conducted experiments on five large-scale real-world temporal graph datasets from TGB, where TNCN achieved new SOTA results on Review. Furthermore, evaluations on traditional small- to medium-scale datasets revealed TNCN achieving SOTA performance on 6 out of 7 datasets in the transductive setting and 3 out of 7 in the inductive setting,
demonstrating its effectiveness. To assess its scalability, datasets were selected with temporal edge counts ranging from  $\mathcal{O}(10^{5})$ to $\mathcal{O}(10^{7})$ and node counts from thousands to millions. On these large-scale datasets, TNCN achieved training speedups of 2.5x$\sim$5.9x and inference speedups of 1.8x$\sim$30.3x compared to graph-based models, while its time consumption remained comparable to that of memory-based models. 

\section{Preliminaries}
\begin{definition}
    \textbf{(Temporal Graph)} We mainly focus on the continuous time dynamic graph (\textbf{CTDG}). A CTDG can be typically represented as a sequence of interaction events: $\mathcal{G} = \{(u_1, v_1, t_1), \cdots, \\ (u_n, v_n, t_n)\}$, where $u,v$ stand for source and destination nodes and $\{t_i\}$ are chronologically non-decreasing timestamps. 
    Note that each node or edge can be attributed, that is, there may be node feature $x_u$ for $u$ or edge feature $e_{u,v}^t$ attached to the event $(u,v,t)$.
\end{definition}
\begin{definition}
    \textbf{(Problem Formulation)} Given the events before time $t^*$, i.e. $\{(u,v,t)\ |\ \forall\ t < t^*\}$, a link prediction task is to predict whether two specified node $u^*$ and $v^*$ are connected at time $t^*$.
\end{definition}
\begin{definition}
\label{def:CN}
    \textbf{(Temporal Neighborhood)} Given the center node $u$, the $k$-hop temporal neighbor set $(k\ge 0)$ before time $t$ is defined as $N_{k}^{t}(u)$. A node $v$ is in $N_{k}^{t}(u)$ if there exists a $k$-length path between $u$ and $v$, i.e. $\exists (u, w_1, w_2, \cdots, w_{k-1}, v)$ where $w_i \neq w_j, \forall i \neq j$. We also define the \textbf{$(\mathbf{i},\mathbf{j})$-hop common neighbor set} as follows: $w$ is an $(i,j)$-hop temporal common neighbor of $u$ and $v$ at time $t$ if $w\in N_{i}^{t}(u)$ and $w\in N_{j}^{t}(v)$. For simplicity we will denote the set as $\text{CN}_{(i,j)}^{t}(u,v) =  N_{i}^{t}(u)\cap N_{j}^{t}(v)$.
    Note that for $i=0$ (or $j=0$ similarly), we define the $0$-hop temporal neighbor set as  $N_{0}^{t}(u) = \{u\}$, and the $(0,j)$-hop common neighbor of $u$ and $v$ as $\text{CN}_{(0,j)}^{t}(u,v) =  N_{0}^{t}(u)\cap N_{j}^{t}(v) = \{u\}\cap N_{j}^{t}(v)$. Finally, the $K$-hop temporal neighborhood of node $u$ at time $t$ is defined as: $\mathop{\cup}\limits_{k=0}^{K}N_{k}^{t}(u)$. 
    
    With $(i,j)$-hop neighborhood information, we can perceive the local structure to a large extent and distinguish the difference between multi-hop common neighbors more precisely. 
\end{definition}

\begin{definition}
\label{pre:memory}
\textbf{Memory-based Backbone.}
Memory-based backbone has been widely adopted by various methods like \citep{kumar2019predicting, rossi2020temporal} to tackle dynamic graph learning. Its core component is the memory module that stores the node memory representations up to a certain time $t$. When a new event occurs, the memory of the source and destination nodes is updated with the message produced by the event. The computation can generally be represented as follows:
\begin{equation}
    \begin{aligned}
    &msg_{src}^t(u,v) = \textit{msgfunc}_{src}(e^t_{u,v}), 
    &mem^t_u = \textit{upd}_{src}(mem^{t-}_u, msg_{src}^t(u,v));\\
    &msg_{dst}^t(u,v) = \textit{msgfunc}_{dst}(e^t_{u,v}),    
    &mem^t_v = \textit{upd}_{dst}(mem^{t-}_v, msg_{dst}^t(u,v)).
    \end{aligned}
\end{equation}
where $msg^t$ stands for the message of the event, $mem^{t-}_u$ for the embedding of node $u$ before time $t$. 
\end{definition}

\section{Methodology}
Now we introduce our \textbf{Temporal Neural Common Neighbor} model. TNCN comprises several key modules: the classic Memory Module, the Temporal NCN Module with efficient CN Extractor, and the NCN-based Prediction Head. Special attention will be given to the Temporal CN Extractor, which is designed to efficiently extract temporal neighboring structures and obtain multi-hop CN information. The pipeline is illustrated in Figure~\ref{fig:pipeline}. A pseudocode is attached in Appendix~\ref{app:pseudocode}.

\begin{figure*}

    \centering
    \includegraphics[width=\textwidth]{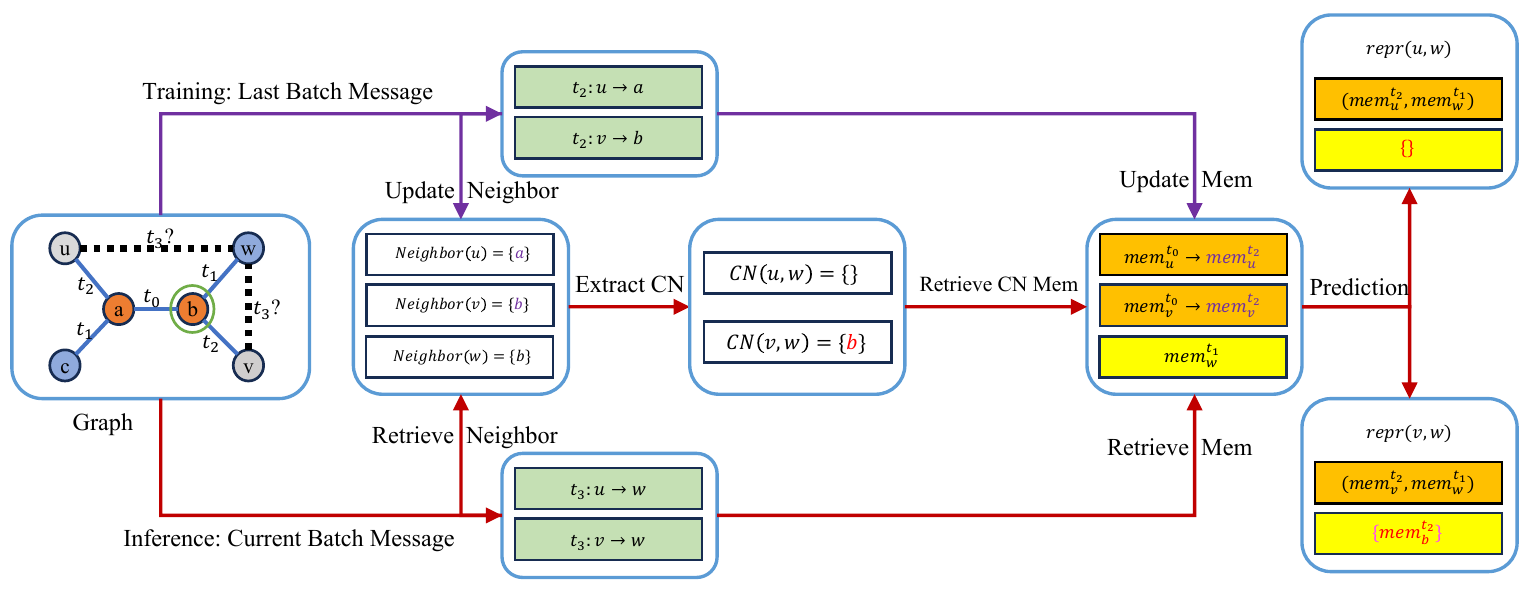}
    
    \caption{Pipeline of TNCN. TNCN operates through a sequential \textcolor{purple}{update} and \textcolor{darkred}{prediction} framework that processes successive batches of messages. During the update phase, TNCN updates the neighbor dictionary and the node memory. In the prediction phase, the model retrieves neighbors to identify common neighbors, leveraging the representations of the target nodes and their CNs for prediction.}
    \label{fig:pipeline}
    \vspace{-0.4cm}
\end{figure*}
\subsection{Memory Module}

Different from the static GNN used in traditional NCN, our model TNCN adopts a \textbf{memory-based backbone} from \citep{rossi2020temporal} to efficiently store and update the node memory, eliminating the need for repeated computation of node embeddings within successive temporal batches. 

Conforming to the standard pipeline in section \ref{pre:memory} of processing the node memory, for later link prediction or other downstream tasks, node embeddings can be obtained from their memory:
\begin{align}
emb^t_u = NN(mem^{t-}_u, \mathop{\cup}\limits_{v\in N^t_{1}(u)}mem^{t-}_v, \mathop{\cup}\limits_{t'<t} [e^{t'}_{u,v} \ ||\ \phi(t-t')]),
\end{align}
where $||$ stands for the concatenation operation.
Here NN has multiple choices, like Identity or simple static GNN~\citep{defferrard2016convolutional,velickovic2017graph,hamilton2017inductive}. In our implementation, we adopt Graph Transformer Convolution~\citep{shi2020masked}, which can pay more attention to the relation between different nodes. And we choose the time encoding $\phi$ presented in Time2Vec~\citep{kazemi2019time2vec} as TGN does.

\subsection{Temporal NCN Module with Efficient CN Extractor}
Our Temporal NCN Module can efficiently perform multi-hop common neighbor extraction with its CN extractor, and aggregate their neural embeddings to attain the node features. 

\textbf{Extended Common Neighbor.}
The definition of multi-hop common neighbors (CN) is given in Definition \ref{def:CN}, extending the traditional (1,1)-hop CN (i.e., nodes on 2-paths between $u$ and $v$) to arbitrary $(i,j)$-hop CN. Additionally, we define the zero-hop neighbor of a central node, i.e., $u$ is considered as a neighbor of itself, which will be utilized to calculate CNs with other nodes. Given source node $u$ and target node $v$, the (0,1)-hop and (1,0)-hop CN not only records the historical interactions between two nodes, but also reveals the frequency of their interactions. 

\textbf{Efficient CN Extractor.}
The CN Extractor is a crucial component of the TNCN model, contributing significantly to its high performance and scalability. It can efficiently gather pertinent information about a given center node and extract multi-hop common neighbors for a source-destination pair.

For each relevant node \( u \), the extractor stores its historical interactions with other nodes as both source and destination. After a batch of events is processed by the model, the storage is updated with the latest interactions. This allows us to maintain a record of all historical interactions up to a certain timestamp, effectively constructing a dynamic lookup dictionary for fast retrieval during subsequent inference. To strike a balance between memory consumption and model capacity, we save only the most recent $K$ events and relevant nodes for each center node, where $K$ is a hyperparameter determined by the specific dataset. 

To implement an efficient batch CN extractor, we organize the historical interactions in a \textit{Sparse Tensor}, representing the temporal adjacency matrix. Then we perform \textbf{self-multiplication} to generate high-order adjacency connectivity. Sparse matrix \textbf{hadamard product} is finally employed to obtain separate $(i,j)$-hop CNs. All these operations can be efficiently implemented by sparse tensor operators and are supported by GPU to facilitate fast, batch processing. Now we show the detailed procedure \textbf{as follows}. For some special and higher-order cases analysis, please refer to Appendix~\ref{app:cn}.

\textbf{Details of Common Neighbor Extraction.}
Our temporal CN extractor begins with a sparse matrix $A$ constructed from the interactions of related nodes. We then include three stages to precisely generate arbitrary $(i,j)$-hop CNs:

1. Generate up to $k$-hop neighbors. The original matrix $A$ only includes 1-hop neighbors. To extend this, we: (a) Use self-loops for 0-hop neighbors, denoted as $A^0$. (b) Perform sparse matrix multiplication (i.e., $A^k$) to include arbitrary k-hop neighbors.
Combining them, we obtain an updated neighborhood matrix set $\hat{A} = \{A^i\}_{i=0}^{K}$.

2. Extract neighbors for each source and destination node with corresponding indices in the same batch. Assume that we require the $k$-th hop neighbors of node $u$, then vector $A^k[id(u)]$ is the result, where $id(u)$ stands for the reindexed id for node $u$. $A^k[id(u)][id(v)] = w > 0$ if $v$ is a $k$-hop neighbor of $u$, otherwise this element is $0$. $w$ represents the historical interaction frequency. 

3. Obtain arbitrary $(i,j)$-hop CNs. We can perform hadamard product of $A^i[id(u)]$ and $A^j[id(v)]$ to acquire different hops of CNs. 
The operator can extract corresponding CNs for source-destination node pairs in a batch parallelly. 

    
    

By re-indexing the node IDs when generating $\hat{A}$ to prevent conflicts, the CN extractor can conduct the sparse matrix calculation, which are all performed in a Torch style that supports \textbf{batch operations}, thus enhancing parallelism and efficiency. 


The utilization of \textbf{Multi-hop Common Neighbors} significantly boosts TNCN's performance, resulting in higher scores in temporal link prediction tasks. Furthermore, by employing sparse tensors, our model achieves substantial reductions in both storage requirements and computational complexity, thereby decreasing time consumption and enhancing efficiency. 
We also give a comparison between our TNCN and traditional NCN in Table~\ref{tab:cmp} in Appendix~\ref{app:ncn}.

\subsection{NCN-based Prediction Head}
We finally construct our NCN-based representation as follows. For source and destination nodes, we perform an element-wise product. For multi-hop CN nodes, we aggregate their embeddings in each hop with sum pooling.
\begin{equation}
    X^t_{u,v} = emb_u^t \otimes emb_v^t, \quad  NCN_{(i,j)}(u,v) = \mathop{\oplus}\limits_{w \in \text{CN}^t_{(i,j)}(u,v)} emb_w^t.
\end{equation}
These embeddings are then concatenated as the final pair-wise representation:
\begin{equation}
    \begin{aligned}
        repr(u,v) &= [X^t_{u,v}\ ||\ (\mathop{||}\limits_{i,j}^{K}) NCN_{(i,j)}(u,v)].
    \end{aligned}
\end{equation}
We have used $\otimes$, $\oplus$, and $||$ to denote element-wise product, element-wise summation, and concatenation of vectors, respectively. The pair-wise representation $repr(u,v)$ for nodes $u$ and $v$ will be fed to a projection head to output the final link prediction.

\section{Efficiency and Effectiveness of TNCN}

In this section, we explore the two principal benefits of TNCN: efficiency and effectiveness. These advantages are demonstrated through an analysis of two core components within the framework for temporal graph link prediction: graph representation learning and link prediction methods. 

Temporal graph representation learning aims to develop an embedding function, denoted as $Emb$, which learns an embedding for each node encoding its structural and feature information within the graph. Specifically, given a new event represented as $(u, v, t)$, the function $Emb$ leverages prior events to generate meaningful embeddings. 
We first categorize graph representation learning approaches into two types: memory-based and $k$-hop-subgraph-based, according to \textbf{their temporal scope of evolved events}.

\begin{definition}
\textbf{Memory-based approach.} Given a new event $(u,v,t)$, if $Emb$ conforms to the following form, the method is referred to as a memory-based approach, which opts to maintain a dynamic, incrementally updated embedding for each node.
\begin{equation}
    Emb(u, t) = f_{\textit{emb}}(Mem(u, t')),~~~~
    Emb(v, t) = f_{\textit{emb}}(Mem(v, t')),
\end{equation}
where the $Mem$ can be obtained from the pipeline in Definition \ref{pre:memory}, and $f_{\textit{emb}}$ is a learnable function.
\end{definition}

\begin{definition}
\textbf{$k$-hop-subgraph-based approach.} Given a new event $(u,v,t)$, if $Emb$ conforms to the following form, the method is defined as a subgraph-based approach, which chooses to recalculate node embeddings by considering the entire historical events.
\begin{equation}
    Emb(u, t) = f_{\textit{emb}}(\mathcal{G}_{u, <t}^{k}), ~~~~ Emb(v, t) = f_{\textit{emb}}(\mathcal{G}_{v, <t}^{k}),
\end{equation}
where $\mathcal{G}_{u, <t}^{k}$ is a subgraph induced from $\mathcal{G}$ by node $u$'s $k$-hop temporal neighborhood $\mathop{\cup}\limits_{k=0}^{K}N_{k}^{t}(u)$, containing only the edges (events) with time $t' < t$, and $f_{\textit{emb}}$ is a learnable function.
\end{definition}

\subsection{Effectiveness}
The analysis begins by assessing the effectiveness of the two paradigms. 
To do so, we first introduce the concept of $k$-hop event~\citep{LOVASZ1993RandomWO}. 

\begin{definition}
\textbf{$k$-hop event \& monotone $k$-hop event.} 
A \textit{$k$-hop event} is a sequence of consecutive edges $\{ (u_i, u_{i+1}, t_{u_i, u_{i+1}}) \mid i \in \{0, \ldots, k-1\}, k \geq 1 \}$ connecting the initial node $u_0$ to the final node $u_k$. For example, $\{(u,x,t'),(x,v,t)\}$ is a $2$-hop event. In the case where $k=1$, the $k$-hop event reduces to a single interaction $(u, v, t)$. A \textit{monotone $k$-hop event} is a $k$-hop event in which the sequence of timestamps $\{ t_{u_i, u_{i+1}} \mid i \in \{0, \ldots, k-1\}, k \geq 1\}$ is strictly monotonically increasing.
\end{definition}
Then, we analyze the expressiveness of the two approaches in terms of encoding $k$-hop event.
\begin{theorem}
\label{the:encoding}
(Ability to encode $k$-hop events). Given a $k$-hop event $\{(u_i, u_{i+1}, t_{u_i, u_{i+1}}) \mid i \in \{0,\ldots,k-1\}, k\geq1\}$, if the node embedding of $u_0$ at time $t_{u_0, u_1}$ can be reversely recovered from the encoding $\textit{Enc}(\{(u_i, u_{i+1}, t_{u_i, u_{i+1}}) \mid i \in \{0,\ldots,k-1\}, k\geq1\})$, then we say the encoding function \textit{Enc} is capable of encoding the $k$-hop event. The following results outline the encoding capabilities of different learning paradigms:
\begin{itemize}
    \item Memory-based approaches can encode any $k$-hop events with $k=1$.
    \item Memory-based approaches can encode any monotone $k$-hop events with arbitrary $k$.
    \item $k$-hop-subgraph-based approaches can encode any $k'$-hop events with $k' \leq k$.
\end{itemize}
\end{theorem}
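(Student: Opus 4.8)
The plan is to treat each of the three bullet points separately, since each is essentially an unrolling-and-inversion argument on the recursive definitions. The core observation is that ``$\textit{Enc}$ can encode the $k$-hop event'' is an injectivity statement: we must show that the map from the initial embedding $Emb(u_0, t_{u_0,u_1})$ to the terminal encoding is (for suitable choices of the learnable functions) left-invertible. So for each paradigm I would exhibit an explicit choice of $f_{\textit{emb}}$, $f_{\textit{mem}}$ (or $f_{\textit{emb}}$ for the subgraph case) together with an explicit left inverse.

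For the first bullet (memory-based, $k=1$): a $1$-hop event is a single interaction $(u_0,u_1,t)$, and the ``encoding'' is just the result of one memory update. Here I would take $f_{\textit{mem}}$ to retain its first argument losslessly --- e.g.\ $f_{\textit{mem}}(m_{u_0}, m_{u_1}, e, \Delta t) = (m_{u_0}, \text{rest})$ in a product space, or more simply note that with $f_{\textit{mem}} = \mathrm{id}$ on the relevant coordinate and $f_{\textit{emb}} = \mathrm{id}$, one directly reads off $Mem(u_0, t')$ hence $Emb(u_0, t_{u_0,u_1})$ from $Mem(u_0, t)$. The point is existence of functions making this work, so this bullet is a one-line construction.

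For the second bullet (memory-based, monotone $k$): I would induct on $k$, using strict monotonicity of the timestamps to guarantee that the events are processed in the order $( u_0,u_1), (u_1,u_2), \ldots, (u_{k-1},u_k)$ along the event stream, so that the memory of $u_0$ (and the propagated information) flows forward without any later event overwriting the needed coordinate before it is passed on. Concretely, at the first event $Mem(u_1, t_{u_0,u_1})$ records $Mem(u_0, t')$; since $t_{u_1,u_2} > t_{u_0,u_1}$, the second event sees this updated $Mem(u_1,\cdot)$ and passes it into $Mem(u_2, t_{u_1,u_2})$; and so on, so that after all $k$ events some coordinate of $Mem(u_k, \cdot)$ still encodes $Mem(u_0,t')$, from which $Emb(u_0, t_{u_0,u_1})$ is recovered by the chosen $f_{\textit{emb}}$. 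Monotonicity is exactly what makes the recursion compose in the right direction; without it a later-but-earlier-timestamped event could intervene and the forward chain breaks --- this is the conceptual crux and I would state it carefully.

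For the third bullet (subgraph-based, $k' \le k$): here the encoding of a $k'$-hop event is not directly $f_{\textit{emb}}$ of a subgraph, so I would first argue that the $k'$-hop event's nodes and all its edges (each having timestamp $< t_{u_{k'-1},u_{k'}}$, or more to the point all lying within the $k$-hop temporal neighborhood since $k' \le k$) are contained in the induced subgraph $\mathcal{G}^k_{u_0, <t}$ for the appropriate $t$; then choosing $f_{\textit{emb}}$ to be (an encoding of) the full subgraph --- which subsumes the $k'$-hop event as a subconfiguration --- lets us read the event, hence $Emb(u_0,\cdot)$, back off. The containment check is the only place requiring care: I would verify that each intermediate node $u_i$ of the $k'$-hop event is within distance $i \le k' \le k$ of $u_0$ using edges before the relevant time, which is immediate from the definition of $\mathcal{G}^k_{u,<t}$ and $N^t_k$.

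The main obstacle I anticipate is not any single computation but pinning down the right reading of ``reversely recovered'': the theorem is a possibility (expressiveness) statement, so everything hinges on being allowed to design $f_{\textit{emb}}, f_{\textit{mem}}$ adversarially, and on the monotone case the subtle ordering argument must be made rigorous --- in particular handling the boundary coordinates of $u_k$ and ensuring no collision when the same node appears twice in a non-simple $k$-hop event. I would handle the latter by working in a large enough product/embedding space that the needed coordinate is never the one being overwritten.
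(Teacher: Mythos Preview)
Your proposal is correct and follows essentially the same architecture as the paper: treat the three bullets separately, dispatch $k=1$ by a direct one-step inversion, prove the monotone case by induction on $k$ using the timestamp ordering to guarantee the memory updates compose in the right sequence, and prove the subgraph case by a containment argument that every edge of the $k'$-hop event lies inside $\mathcal{G}^k_{u_0,<t}$ since $k'\le k$. The paper's inductive step for the monotone case peels off the \emph{first} event $(u_0,u_1,t_{u_0,u_1})$ and invokes the hypothesis on the tail sequence starting at $u_1$, recovering $Mem(u_1,\cdot)$ via $f_{\textit{emb}}^{-1}$ and then combining with the first event; your forward-propagation framing (carrying $Mem(u_0,t')$ along the chain into $Mem(u_k,\cdot)$) is the dual unrolling of the same recursion and is, if anything, a cleaner way to see why monotonicity is exactly what is needed.
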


The proof is in Appendix~\ref{app:proof}. From Theorem \ref{the:encoding}, we can conclude that 1) memory-based approaches have superior expressiveness in encoding $k$-hop events compared to 1-hop-subgraph-based approaches, and 2) memory-based approaches have superior expressiveness in encoding monotone $k$-hop events than $k'$-hop-subgraph-based approaches when $k'<k$.

While the memory-based approach does not consistently rival the expressiveness of the $k$-hop-subgraph paradigm, it possesses advantages in monotone events and long-history scenarios (where $k$-hop subgraphs would be unaffordable to extract).

\begin{corollary}
    If we use up to $k$ hop neighborhood information of central node $u$, then TNCN can capture at least $(k+1)$-hop subgraph information around $u$. 
\end{corollary}
This is because TNCN with memory-based backbone can obtain additional 1-hop information regardless of the time monotony, i.e. arbitrary central node can interact with its neighbor when the edge between them exists. This 
can extend TNCN's capability for free.

We also demonstrate the effectiveness of TNCN in the following theorem~\ref{the:cn} with the proof in Appendix~\ref{app:proof}. The first part shows that it can capture three important pairwise features commonly used as effective link prediction heuristics, namely Common Neighbors (CN), Resource Allocation (RA), and Adamic-Adar (AA)~\citep{newman2001clustering, adamic2003friends, zhou2009predicting}. The experimental results in Appendix~\ref{app:exps} further validate this claim. In the second part we reveal that TNCN is strictly more expressive than some traditional temporal graph networks including Jodie~\citep{kumar2019predicting}, DyRep~\citep{trivedi2019dyrep}, TGN~\citep{rossi2020temporal} and TGAT~\citep{xu2020inductive} under the same condition, which are widely used as baselines. 
\begin{theorem}
\label{the:cn}
(Expressivity of TNCN)
\begin{enumerate}
    \item TNCN is strictly more expressive than CN, RA, and AA.
    \item TNCN is strictly more expressive than Jodie with the same dimension of time encoding, DyRep with the same aggregation function, TGAT with the same attention layers and neighbors, and TGN under identical condition for all module choices.
\end{enumerate}
\end{theorem}

From this theorem we can find that TNCN extends the previous generic memory-based framework of temporal graph networks with explicitly adopting the neural embeddings of common neighbors. This method can serve as a complementary addition for learning pair-wise representations.


\subsection{Efficiency}
We then turn our attention to the efficiency of the two approaches. A pivotal factor is the frequency with which individual events are incorporated into computations. In memory-based approaches, each event is utilized \textbf{a single time} for learning, immediately following its associated prediction. Conversely, in the $k$-hop-subgraph-based method, an event may be employed \textbf{multiple times}, as it is revisited in different nodes' temporal neighborhood and repeated been processed within each subgraph's encoding (such as message passing) process. This discrepancy leads to divergent cumulative frequencies of event utilization throughout the learning process, resulting in the huge efficiency advantage of memory-based methods. We formalize this observation as:

\begin{theorem}
\label{the:time}
    (Learning method time complexity). Denote the time complexity of a learning method as a function of the total number of events processed during training. For a given graph $\mathcal{G}$ with the number of nodes designated as $|\mathcal{N}|$ and the number of edges as $|\mathcal{E}|$, the following assertions hold:
    \begin{itemize}
        \item For memory-based approaches, the time complexity is $\Theta\left(|\mathcal{E}|\right)$.
        \item For $k$-hop-subgraph-based approaches with $k=1$, the lower-bound time complexity is $\Omega\left(\frac{|\mathcal{E}|^2}{|\mathcal{N}|}\right)$, and the upper-bound time complexity is $\mathcal{O}\left(\frac{|\mathcal{E}|^2}{|\mathcal{N}|} + |\mathcal{E}||\mathcal{N}|\right)$.
        \item For $k$-hop-subgraph-based approaches with $k=2$, the upper-bound time complexity is $\mathcal{O}\left(\left(\frac{|\mathcal{E}|^2}{|\mathcal{N}|} + |\mathcal{E}||\mathcal{N}|\right)^{\frac{3}{2}}\right)$.
    \end{itemize}
\end{theorem}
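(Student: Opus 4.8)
The plan is to count, for each paradigm, how many ``elementary operations'' (message constructions / aggregations over events) are performed across a full pass over the event stream, expressed as a function of $|\mathcal{E}|$ and $|\mathcal{N}|$. For the memory-based case the argument is immediate: by the definition of the memory-based approach, processing the event $(u,v,t)$ triggers exactly two memory updates $f_{\textit{mem}}$ and the associated embedding reads $f_{\textit{emb}}$, each a constant-size computation. Summing over all $|\mathcal{E}|$ events gives both an $O(|\mathcal{E}|)$ upper bound and an $\Omega(|\mathcal{E}|)$ lower bound (every event must at least be read once), hence $\Theta(|\mathcal{E}|)$.

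For the $k$-hop-subgraph case with $k=1$, I would first observe that making a prediction for an event $(u,v,t)$ requires extracting the $1$-hop temporal neighborhoods $N_1^{t}(u)$ and $N_1^{t}(v)$ and running the encoder (message passing) on the induced subgraphs $\mathcal{G}_{u,<t}^{1}$ and $\mathcal{G}_{v,<t}^{1}$. The cost of one message-passing pass on $\mathcal{G}_{u,<t}^{1}$ is $\Theta(|E(\mathcal{G}_{u,<t}^{1})| + |N_1^{t}(u)|)$, i.e. edges processed plus nodes touched. Now I sum this over all events in the stream. The edge term: each past event $(a,b,t')$ gets re-processed once for every later event incident to $a$ or to $b$; summing $\sum_v \deg(v)^2$ and using convexity (Cauchy--Schwarz: $\sum_v \deg(v)^2 \ge (\sum_v \deg(v))^2/|\mathcal{N}| = 4|\mathcal{E}|^2/|\mathcal{N}|$) yields the $\Omega(|\mathcal{E}|^2/|\mathcal{N}|)$ lower bound and, in the worst case (a near-regular graph, or more carefully a skewed one where the bound is still matched), $O(|\mathcal{E}|^2/|\mathcal{N}|)$ for the edge part. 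The node term contributes an extra $O(|\mathcal{E}|\cdot|\mathcal{N}|)$ in the worst case, since each of the $|\mathcal{E}|$ predictions may touch up to $\Theta(|\mathcal{N}|)$ distinct $1$-hop neighbors; adding the two terms gives the stated upper bound $O(|\mathcal{E}|^2/|\mathcal{N}| + |\mathcal{E}||\mathcal{N}|)$.

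For $k=2$ I would bootstrap from the $k=1$ analysis: the size of a $2$-hop temporal neighborhood $\bigcup_{k=0}^{2} N_k^{t}(u)$ is bounded by iterating the neighbor-expansion operation, so the number of nodes/edges in $\mathcal{G}_{u,<t}^{2}$ is controlled by the ``square'' of the $1$-hop growth; running message passing on a subgraph of size $S$ costs $O(S^{3/2})$ when the subgraph itself can have up to $\Theta(S^{3/2})$ edges (a subgraph on $S$ nodes from a graph whose degree profile is the one forced by the $k=1$ bound), and substituting $S = O(|\mathcal{E}|^2/|\mathcal{N}| + |\mathcal{E}||\mathcal{N}|)$ — the per-pass cost already established for $k=1$ playing the role of the $2$-hop subgraph size — gives the $O\!\bigl((|\mathcal{E}|^2/|\mathcal{N}| + |\mathcal{E}||\mathcal{N}|)^{3/2}\bigr)$ bound. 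I would make the exponent $3/2$ precise by noting that a $k$-hop ball obtained by two rounds of expansion has edge count at most the $3/2$ power of the $1$-hop edge count under the relevant degree assumptions.

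The main obstacle I expect is pinning down the exact accounting for the upper bounds — in particular justifying that the worst-case degree distribution simultaneously (nearly) matches the $\Omega(|\mathcal{E}|^2/|\mathcal{N}|)$ convexity lower bound and does not blow the $O(|\mathcal{E}||\mathcal{N}|)$ node term, and separating ``edge-processing'' cost from ``neighborhood-extraction'' cost cleanly so that the two additive terms are both tight. The $k=2$ case compounds this: one must argue that the $3/2$ exponent (rather than, say, $2$) is the right bound on how a second hop inflates the subgraph, which requires a careful size estimate of two-step neighborhoods in terms of the first-step bound rather than a naive re-squaring. The lower bound and the memory-based case, by contrast, should be routine.
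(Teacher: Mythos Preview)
Your memory-based analysis and the $k=1$ lower bound (via Cauchy--Schwarz on $\sum_v d(v)^2$) match the paper. The gaps are in the two upper bounds.

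For $k=1$, the paper does not split into an ``edge term'' and a ``node term''. It identifies the total work directly as $\Theta\!\bigl(\sum_{u}d(u)^{2}\bigr)$ (the $i$-th event at $u$ sees $i$ prior events there, so $\sum_{i=1}^{d(u)} i$) and then invokes De~Caen's inequality for the sum of squared degrees, which gives both terms of the upper bound at once. Your decomposition has a concrete error: the ``edge part'' is \emph{not} $O(|\mathcal{E}|^2/|\mathcal{N}|)$ in general --- on a star with $|\mathcal{N}|-1$ leaves, $\sum_v d(v)^2 \approx |\mathcal{N}|^2$ while $|\mathcal{E}|^2/|\mathcal{N}| \approx |\mathcal{N}|$. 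The $|\mathcal{E}||\mathcal{N}|$ term is not a separate ``node-touching'' cost; it is part of the bound on $\sum_v d(v)^2$ itself. So the route to the upper bound is a single degree-sum inequality, not two independently bounded pieces.

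For $k=2$, the paper again writes out the total event count explicitly, $\sum_{u} d(u)\,\bigl(\sum_{v\in N_u}\sum_{w\in N_v} d(w)\bigr)$, sets $X_u=d(u)$ and $Y_u=\sum_{v\in N_u}\sum_{w\in N_v} d(w)$, applies Cauchy--Schwarz $\sum_u X_uY_u \le (\sum_u X_u^2)^{1/2}(\sum_u Y_u^2)^{1/2}$, and bounds $\sum_u Y_u^2$ via $\sum_u d(u)^4$ and the $k=1$ conclusion. That is where the exponent $3/2$ comes from --- it is $(\text{bound on }\sum d(u)^2)^{1/2}\cdot(\text{bound on }\sum d(u)^4)^{1/2}$, not a geometric fact about edge density of subgraphs. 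Your $S^{3/2}$ argument conflates two different quantities: $S$ is described as the $k=1$ \emph{total cost across all predictions}, but you then use it as the node count of a \emph{single} 2-hop subgraph. Those are not the same object, and the claim that a subgraph on $S$ nodes has $\Theta(S^{3/2})$ edges has no general justification. The obstacle you flagged in your last paragraph is real, and the resolution is the algebraic Cauchy--Schwarz step on the explicit double sum, not a subgraph-size bootstrap.
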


The proof is attached in Appendix~\ref{app:proof} with part of the proof based on a classic conclusion from \citet{DECAEN1998245} in the graph theory.
Following Theorem \ref{the:time}, it becomes evident that the computational overhead incurred by a memory-based method is significantly lower than that of a subgraph-based method, particularly as $k$ increases. These results highlight the advantages of memory-based methods in mitigating the computational efficiency challenges associated with large-scale temporal graphs.

So based on the memory backbone, our TNCN furthur utilizes an efficient CN extractor, eliminating the necessity for message passing on entire graphs. Consequently it achieves a unified optimization objective: \textbf{to avoid message passing on entire subgraphs in favor of non-repetitive operations}.

To summarize, TNCN introduces the extended common neighbor approach with the efficient CN extractor. This method serves as a complementarity for learning pair-wise representations, while  culminating in a cohesive solution that is both efficient and effective.

\section{Related Work}
\noindent\textbf{Memory-based Temporal Graph Representation Learning.} 
Memory-based models learn node memory using continuous events with non-decreasing timestamps. Researchers have proposed several memory-based methods including JODIE \citep{kumar2019predicting},  DyRep \citep{trivedi2019dyrep} and TGN \citep{rossi2020temporal}. These methods are superior in higher efficiency, while lacking in capturing structural information.

\noindent\textbf{Graph-based Temporal Graph Representation Learning.}
Subsequent works have incorporated the neighborhood structure into temporal graph learning. CNE-N \citep{cheng2024co} employs a hash table to map interactions and computes co-neighbor encodings to predict future links. Unlike our TNCN which directly utilizes CN embeddings in prediction, CNE-N simply counts neighbor numbers.
Additionally, while CNE-N manages recent interactions through hash tables, TNCN adopts a monotonic storage scheme. NAT \citep{luo2022neighborhood} similarly builds a multi-hop node dictionary to compress neighbors, but it may suffer from hash collisions. DyGFormer \citep{yu2023towards} encodes one-hop neighbors and their co-occurrences, then uses a Transformer for predictions, requiring repeated neighborhood sampling and computations. It only models 1-hop neighbor frequency and does not use CN embeddings. In contrast, TNCN supports multi-hop neighbors and includes their embeddings, enabling richer representations. 

We also include detailed introduction to more related works in Appendix \ref{app:related-work}.

\section{Experiments}
This section assesses TNCN's effectiveness and efficiency by answering the following questions:

\textbf{Q1:} What is the performance of TNCN compared with state-of-the-art baselines? \\
\textbf{Q2:} What is the computational efficiency of TNCN in terms of time consumption? \\
\textbf{Q3:} Do the extended common neighbors bring benefits to original common neighbors?

\subsection{Experimental Settings}

\textbf{Datasets.}
We evaluate our model on five large-scale real-world datasets for temporal link prediction from the \textbf{\textit{Temporal Graph Benchmark}}~\citep[]{huang2023temporal}. These datasets span several distinct fields: co-editing network on Wikipedia, Amazon product review network, cryptocurrency transactions, directed reply network of Reddit, and crowdsourced international flight network. They vary in scales and time spans. Additional details about the datasets are provided in Appendix \ref{app:datasets}. We set the evaluation metric as \textbf{Mean Reciprocal Rank (MRR)} consistent with the TGB official leaderboard.

\textbf{Baselines.}
We systematically evaluate our proposed model TNCN against a diverse set of baselines: a heuristic algorithm Edgebank \citep{yu2023towards}, memory-based models Jodie \citep{kumar2019predicting}, DyRep \citep{trivedi2019dyrep} and TGN \citep{rossi2020temporal} that obviate the need for frequent temporal subgraph sampling, and GraphMixer \citep{cong2023we} which employs an MLP-mixer. We also include various graph-based models such as CAWN \citep{wang2021inductive}, TGAT \citep{xu2020inductive}, TCL \citep{wang2021tcl}, NAT \citep{luo2022neighborhood}, DyGFormer \citep{yu2023towards}, CNE-N \citep{cheng2024co} and TPNet \citep{lu2024improvingtemporallinkprediction}, which learn from neighborhood structure information. 

Here we evaluate our TNCN under two similar but different settings, the official setting (``official'') and the new setting (``ns''). ``*-official'' strictly complies to the \textbf{official setting} of TGB evaluation policy, using both \textit{streaming setting} and \textit{lag-one scheme} for both memory update and neighborhood awareness. Streaming setting means the information of the validation and test sets can only be employed for updating the memory without any back propagation. Lag-one scheme implies that the model can access only the information from \textbf{before the current batch} for predictions; in other words, the latest usable batch is the previous one. This applies to not only the memory, but also the \textbf{neighborhood awareness}. ``*-ns'' obeys the streaming setting but considers the interactions within the same batch before the current prediction time. This allows the model to utilize more recent neighborhood information, potentially giving it unfair advantages in datasets where recent interactions are crucial. Methods ``*-official'' use the former setting while others report the latter. For a fair evaluation and comparison, here we display the performance of our TNCN under both settings. 


\begin{table*}[t]

\begin{center}
    \caption{Test Performance of different models under MRR metric. The top three are emphasized by \textcolor{darkred}{red}, \textcolor{blue}{blue} and \textbf{bold} fonts. ‘-’ denotes scenarios where a specific method was either not applied to the dataset or was unable to complete the validation and testing phases within a reasonable timeframe. 
    }
    \label{tab:main}
    
    \resizebox{\textwidth}{!}{
    \begin{tabular}{lccccc}
        \toprule

Model & Wiki        & Review      & Coin        & Comment       & Flight     \\ 
\midrule
JODIE & 0.631 $\pm$ 1.69 & \textbf{0.414 $\pm$ 0.15} & - & - & - \\
DyRep & 0.519 $\pm$ 1.95 & 0.401 $\pm$ 0.59 & 0.452 $\pm$ 4.60 & 0.289 $\pm$ 3.30 & 0.556 $\pm$ 1.40 \\
TGAT & 0.599 $\pm$ 1.63 & 0.196 $\pm$ 0.23 & 0.609 $\pm$ 0.57 & 0.562 $\pm$ 2.11 & - \\
TGN-official  & 0.528 $\pm$ 0.06 &     0.387 $\pm$ 0.02 & 0.737 $\pm$ 0.03 & 0.622 $\pm$ 0.02   & 0.705 $\pm$ 0.02    \\ 
TGN-ns & 0.689 $\pm$ 0.53 & 0.375 $\pm$ 0.23 & 0.586 $\pm$ 3.70 & 0.379 $\pm$ 2.10 & - \\
CAWN & 0.730 $\pm$ 0.60 & 0.193 $\pm$ 0.10 & - & - & - \\
EdgeBank(tw) & 0.633 & 0.029 & 0.574  & 0.149  & 0.387 \\
EdgeBank(un) & 0.525 & 0.023 & 0.359 & 0.129 & 0.167 \\
TCL & 0.781 $\pm$ 0.20 & 0.165 $\pm$ 1.85 & 0.687 $\pm$ 0.30 & 0.701 $\pm$ 0.83 & - \\
GraphMixer & 0.598 $\pm$ 0.39 & 0.369 $\pm$ 1.50 & 0.756 $\pm$ 0.27 & \textbf{0.762 $\pm$ 0.17} & - \\
NAT & 0.749 $\pm$ 1.00 & 0.341 $\pm$ 2.00 & - & - & - \\
DyGFormer & 0.798 $\pm$ 0.42 & 0.224 $\pm$ 1.52 & 0.752 $\pm$ 0.38 & 0.670 $\pm$ 0.14 & - \\
CNE-N & \textbf{0.802 $\pm$ 0.20} & 0.261 $\pm$ 0.25 & \textcolor{blue}{0.772 $\pm$ 0.21} & \textcolor{blue}{0.790 $\pm$ 0.14} & - \\
TPNet & \textcolor{darkred}{0.827 $\pm$ 0.01} & - & \textcolor{darkred}{0.832 $\pm$ 0.01} & \textcolor{darkred}{0.825 $\pm$ 0.06} & \textcolor{darkred}{0.884 $\pm$ 0.01} \\
\midrule
TNCN-official & 0.724 $\pm$ 0.01 & \textcolor{blue}{0.419 $\pm$ 0.09} & 0.770 $\pm$ 0.06 & 0.727 $\pm$ 0.12 & \textbf{0.817 $\pm$ 0.04}\\
TNCN-ns & \textcolor{blue}{0.803 $\pm$ 0.01} & \textcolor{darkred}{0.427 $\pm$ 0.06} & \textbf{0.771 $\pm$ 0.04} & 0.705 $\pm$ 0.12 & \textcolor{blue}{0.831 $\pm$ 0.03}\\

\bottomrule
    \end{tabular}
    }
    
\end{center}
\vspace{-0.5cm}

\end{table*}

\subsection{Experimental Results}

\noindent\textbf{Reply to Q1: TNCN possesses remarkable performance.} 
We conducted comprehensive evaluations of prevailing methods on TGB. The main results are summarized in Table \ref{tab:main}. It is evident from the table that TNCN attains \textbf{new SOTA performance on Review dataset}. Additionally, TNCN demonstrates competitive results on the remaining datasets, ranking 2nd or/and 3rd on Wiki, Coin and Flight. TNCN almost consistently surpasses both memory-based models such as TGN and DyRep, and graph-based methods like DyGFormer and NAT.
The dataset where TNCN still exhibits a large gap from the top-three baselines is Comment. This may be ascribed to the high ``surprise index'' of this non-bipartite dataset, wherein prior events have a diminished correlation with subsequent events, potentially reducing the impact of CNs. (More details about \textit{Surprise index} are in Appendix~\ref{app:datasets}.)

To obtain a more comprehensive evaluation, we have also conducted experiments on the previous small and medium datasets under both transductive and inductive settings. In these experiments, TNCN achieves \textbf{6 new SOTA} out of 7 datasets \textbf{in transductive setting} and \textbf{3 in inductive setting}, further facilitating its strong performance. The overall performance and some additional results (such as TGN with heuristics, \textit{etc.}) can be referred to Appendix
 \ref{app:exps}.

\noindent\textbf{Reply to Q2: TNCN shows great scalability on large datasets.}
To evaluate computational efficiency, we collected the time consumption 
on Wiki and Review datasets, as depicted in Figure \ref{fig:time}. Compared with memory-based methods, TNCN exhibits a comparable order of magnitude in terms of time consumption. However, when benchmarked against graph-based models, TNCN demonstrates a substantial acceleration, achieving approximately 2.5 to 5.9 times speedup during the training phase and a 1.8 to 30.3 times increase in inference speed. In the Table ~\ref{tab:efficiency-4090} we provide a full comparison of the time consumption for different models over the five TGB datasets.  Notably, the scalability concerns become even more evident as the size of the dataset expands; several graph-based models cannot complete the validation and testing processes within a reasonable time budget. The primary factors contributing to TNCN’s efficiency are the synergistic, time-efficient design of its two core components and the implementation of the Efficient CN Extractor that facilitates batch operations through parallel processing. For detailed statistics about TNCN and NAT, please refer to Appendix~\ref{app:time}.

\begin{figure}
    \vspace{-0.1cm}
    \centering
    \subfigure[tgbl-wiki (log scale)]{
        \includegraphics[width=0.7\linewidth]{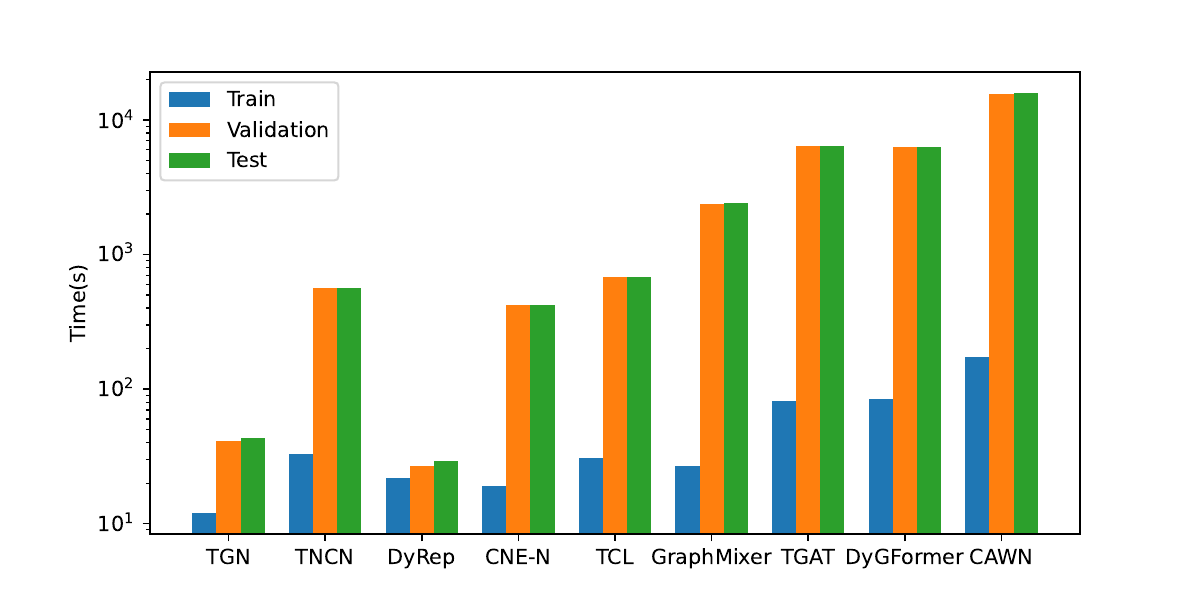}
    }
    \subfigure[tgbl-review (linear scale)]{
        \includegraphics[width=0.7\linewidth]{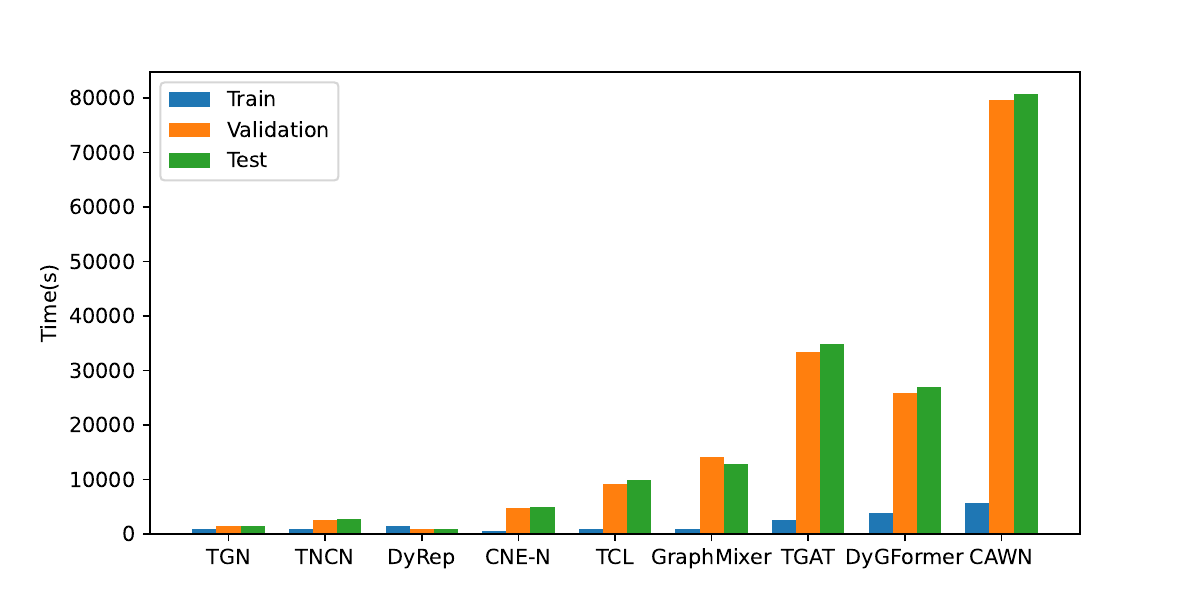}
    }
    \caption{Time Consumption of Memory and Graph-based Method on Wiki and Review Datasets.}
    \label{fig:time}
\end{figure}

\subsection{Ablation Study}
\noindent\textbf{Reply to Q3: Extended CN brings improvements.}
To elucidate the benefits of extended CN, we conducted an ablation study under \textbf{official setting} on the hop range of common neighbors. The results are shown in Table \ref{tab:ablation}. Here we use notation "$k$-hop CN" to simply denote the CNs up to $(k,k)$-hop. The conventional NCN method considers only (1,1)-hop CN. However, this approach may not be universally applicable across all temporal networks. For instance, \textit{bipartite graphs} lack such (1,1)-hop CN in their structure, necessitating the consideration of 2-hop CN. Additionally, memory-based methods may omit a notable aspect: they generally find it difficult to quantify the frequency of interactions between a given pair of nodes, which brings the need for 0-hop neighborhood.

To address these limitations, we have expanded the original (1,1)-hop CN to 0$\sim$k-hop CN. The results indicate that TNCN utilizing 0$\sim$1-hop CN markedly surpasses the (1,1)-hop CN on various datasets. This enhancement underscores the significance of the introduced 0-hop neighbors to our architecture. Nevertheless, the inclusion of 2-hop CN yields mixed results across datasets. We also show the result of TNCN without temporal NCN module (which is TGN), revealing the effectiveness of this module.

\begin{table}[t]

\begin{center}
    \caption{Test performance of TGN and TNCN with different ranges of common neighbors.}
    
    \label{tab:ablation}
    \resizebox{0.7\linewidth}{!}{
    \begin{tabular}{lcccc}
        \toprule
        Model & Wiki  & Review        & Coin  & Comment   \\ 
        \midrule
        TGN & 0.528  & 0.387            & 0.737 & 0.622    \\ 
        TNCN-1-hop-CN   & 0.621      & \textbf{0.419} & 0.737 & 0.641 \\ 
        TNCN-0$\sim$1-hop-CN  & 0.720        & 0.298       & 0.739 & \textbf{0.727}    \\ 
        TNCN-0$\sim$2-hop-CN & \textbf{0.724} & 0.317 & \textbf{0.770} & 0.662   \\ 
        \bottomrule
    \end{tabular}
    }
\end{center}
\vspace{-0.3cm}

\end{table}

We have also conducted experiments for parameter analysis. Please refer to Appendix \ref{app:params-app} for details.

\section{Conclusion and Limitation}
We propose TNCN for temporal graph link prediction, which employs a temporal common neighbor extractor combined with a memory-based node representation learning module. TNCN has achieved new state-of-the-art results on several real-world datasets while maintaining excellent scalability to handle large-scale temporal graphs.

However, based on our observation of TNCN's performance on the Comment dataset, there are some limitations in our model. Specifically, non-bipartite datasets with high surprise values, such as the Comment dataset, tend to make it more challenging for TNCN to accurately predict the probability of future connections. This indicates that while TNCN performs well overall, it may struggle with datasets that exhibit high variability or unexpected patterns. Further research is needed to address these challenges and improve the model's robustness in such scenarios.

\section*{Acknowledgement}
This work is supported by the National Key R$\&$D Program of China (2022ZD0160300) and National Natural Science Foundation of China (62276003).

\bibliographystyle{unsrtnat}
\bibliography{reference}
\newpage 
\appendix

\section{Datasets}
\label{app:datasets}
Table~\ref{tab:TGB} shows some detailed datasets statistics of TGB and \ref{tab:previous} shows several temporal graph datasets commonly used by previous work. Through the two tables we can observe that TGB official datasets possess temporal graphs with larger scale to 10 million, 10 times surpassing the largest previous datasets such as LastFM.
With the aim to examine our TNCN model's efficiency, we choose the increasingly accepted datasets TGB in the main table.

\begin{table*}[htp]
\begin{center}
    \caption{TGB Dataset Statistics.}
    \label{tab:TGB}
    
    \resizebox{\textwidth}{!}{
    \begin{tabular}{lcccccc}
        \toprule
    Dataset & Domain & Nodes & Edges & Steps & Surprise & Edge Properties \\ 
        \midrule
    tgbl-wiki    & interact & 9,227 & 157,474 & 152,757 & 0.108 & W: $\times$, Di: \checkmark, A: \checkmark \\
    tgbl-review  & rating   & 352,637 & 4,873,540 & 6,865 & 0.987 & W: \checkmark, Di: \checkmark, A: $\times$ \\
    tgbl-coin    & transact & 638,486 & 22,809,486 & 1,295,720 & 0.120 & W: \checkmark, Di: \checkmark, A: $\times$ \\
    tgbl-comment & social   & 994,790 & 44,314,507 & 30,998,030 & 0.823 & W: \checkmark, Di: \checkmark, A: \checkmark \\
    tgbl-flight  & traffic  & 18143 & 67,169,570 & 1,385 & 0.024 & W: $\times$, Di: \checkmark, A: \checkmark \\
        \bottomrule
    \end{tabular}
    }
    \end{center}

\end{table*}

Here ``Surprise index''~\citep{poursafaei2022towards} refers to the ratio of test edges that are not seen during training, which can be calculated as $\frac{|E_{test}/E_{train}|}{|E_{test}|}$. Low surprise index implies that memory-based methods such as Edgebank~\citep{poursafaei2022towards} may potentially achieve good performance, while high surprise may require more inductive capability. The surprise index varies across TGB datasets.

\begin{table*}[htp]
    \begin{center}
    \caption{Previous Dataset Statistics.}
    \label{tab:previous}

    \resizebox{\textwidth}{!}{
    \begin{tabular}{lcccccccc}
    \toprule
 Datasets & Domains & Nodes & Links & N\&L Feat 
 & Bipartite & Duration & Unique Steps & Time Granularity \\
 \midrule
Wikipedia & Social & 9,227 & 157,474 & – \& 172 & $\checkmark$ & 1 month & 152,757 & Unix timestamps \\
Reddit & Social & 10,984 & 672,447 & – \& 172 & $\checkmark$ & 1 month & 669,065 & Unix timestamps \\
MOOC & Interaction & 7,144 & 411,749 & – \& 4 & $\checkmark$ & 17 months & 345,600 & Unix timestamps \\
LastFM & Interaction & 1,980 & 1,293,103 & – \& – & $\checkmark$ & 1 month & 1,283,614 & Unix timestamps \\
Enron & Social & 184 & 125,235 & – \& – & $\times$ & 3 years & 22,632 & Unix timestamps\\
UCI & Social & 1,899 & 59,835 & – \& – & $\times$ & 196 days & 58,911 & Unix timestamps \\
\bottomrule
    \end{tabular}}
    \end{center}

\end{table*}

\section{Related Work}
\label{app:related-work}
\subsection{Memory-based Temporal Graph Representation Learning}
Temporal graph learning has garnered significant attention in recent years. A classic approach in this domain involves learning node memory using continuous events with non-decreasing timestamps. \citet{kumar2019predicting} propose a coupled recurrent neural network model named JODIE that learns the embedding trajectories of users and items. Another contemporary work DyRep~\citep{trivedi2019dyrep} aims to efficiently produce low-dimensional node embeddings to capture the communication and association in dynamic graphs. \citet{rossi2020temporal} introduces a memory-based temporal neural network known as TGN, which incorporates a memory module to store temporal node representations updated with messages generated from the given event stream. Apan~\citep{Wang_2021} advances the methodology by integrating asynchronous propagation techniques, markedly increasing the efficiency of handling large-scale graph queries. EDGE~\citep{chen2021efficient} emerges as a computational framework focusing on increasing the parallelizability by dividing some intermediate nodes in long streams each into two independent nodes while adding back their dependency by training loss. \citet{chen2023recurrent} extend the update method for the node memory module, introducing an additional hidden state to record previous changes in neighbors. Complementing these efforts, additional contributions such as Edgebank~\citep{poursafaei2022towards} and DistTGL~\citep{zhou2023disttgl} have been directed towards formalizing and accelerating memory-based temporal graph learning methods.

\subsection{Graph-based Temporal Graph Representation Learning}

Subsequent works have incorporated the temporal neighborhood structure into temporal graph learning. CAWN \citep{wang2021inductive} employs random anonymous walks to model the neighborhood structure. TCL \citep{wang2021tcl} samples a temporal dependency interaction graph that contains a sequence of temporally cascaded chronological interactions. TGAT \citep{xu2020inductive} considers the temporal neighborhood and feeds the features into a temporal graph attention layer utilizing a masked self-attention mechanism. NAT \citep{luo2022neighborhood} constructs a multi-hop neighboring node dictionary to extract joint neighborhood features and uses RNN to recursively update the central node's embedding. 
DyGFormer \citep{yu2023towards}, instead, leverages one-hop neighbor embeddings and the co-occurrence of neighbors to generate features, which are well-patched and subsequently fed into a Transformer \citep{vaswani2017attention} decoder to obtain the final prediction. FreeDyG~\citep{tian2023freedyg} also utilizes historical interaction frequency akin to DyGFormer, afterwards transforming it with Fast Fourier Transform (FFT) and IFFT through the frequency domain. 
LPFormer~\citep{Shomer_2024} attempts to adaptively learn the pairwise encodings via graph attention module, utilizing relative position, ppr value and neighboring information to obtain the score. CNE-N~\citep{cheng2024co} uses a hash table to map an interaction event to its position. It calculates the co-neighbor encoding for each (neighbor - end node) pair within the local subgraph, recording the number of their common neighbors. These information are then concatenated to predict the probability of the future link. TPNet \citep{lu2024improvingtemporallinkprediction} constructs temporal walk matrices via random propagation to simultaneously consider both temporal and structural
information. Pairwise and auxiliary feature are then decoded to make the prediction. Another work RepeatMixer \citep{zou2024repeatawareneighborsamplingdynamic} pays more attention to the repeat-aware neighbors. Such neighbor sequences are then leveraged and adaptively aggregated to learn the temporal patterns.
DyG-Mamba~\citep{li2024dygmambacontinuousstatespace} introduces SSM to dynamic graph learning. It first extracts the first-hop interaction sequence between the given node pairs and encodes them. The time-span encoding between any two continuous timestamps are also computed, serving as control signals for the continuous SSM to obtain the final representation.
CrossLink~\citep{crosslink10.1145/3696410.3714792} represents the graph evolution by a sequence of temporal events, where each event representation is obtained from a graph encoder. It finally utilizes a decoder-only transformer to model the sequence and predicts the next token, i.e. the link existence. 
EAGLE~\citep{li2025speedmeetsaccuracyefficient} aggregates the most recent neighbors of a central node and leverage temporal personalized PageRank value to capture the structural pattern, afterwards using adaptive weights to dynamically merge these features to get the prediction result.

\subsection{Link Prediction Methods}
Link prediction is a fundamental task in graph analysis, aiming to determine the likelihood of a connection between two nodes. Early investigations posited that nodes with greater similarity tend to be connected, which led to a series of heuristic algorithms such as Common Neighbors, Katz Index, and PageRank~\citep{newman2001clustering, katz1953new, page1999pagerank}. With the advent of GNNs, numerous methods have attempted to utilize vanilla GNNs for enhancing link prediction, revealing sub-optimal performance due to the inability to capture important pair-wise patterns such as common neighbors~\citep{zhang2018link, zhang2021labeling, liang2022can}. Subsequent research has focused on infusing various forms of inductive biases to retrieve intricate pair-wise relationships. For instance, SEAL~\citep{zhang2018link}, Neo-GNN~\citep{yun2021neo}, and NCN~\citep{wang2023neural} have integrated neighbor-overlapping information into their design. BUDDY~\citep{chamberlain2022graph} and NBFNet~\citep{zhu2021neural} have concentrated on extracting higher-order structural information. Additionally, \citet{mao2023revisiting, li2024evaluating} have contributed to a more unified framework encompassing different heuristics.

\section{TNCN Model Configuration}
\paragraph{Network Choice} In our experiment, the changeable neural networks are chosen as follows:

In Memory Module, we choose $Identity$ as $\textit{msgfunc}$ and GRU as $upd$. In inference stage we process node memory with Graph Attention Embedding to get the temporal representation. As for Prediction Head, we finally choose $MLP$ as the $repr$ function.

\paragraph{Hyper-parameter}
Several detailed hyper-parameters for TNCN are shown in Table~\ref{tab:hyper}, which can help researchers to reproduce the experiment performance as reported in this paper. 

\begin{table}[htbp]
    \centering
    \caption{Some Experiment Hyper-parameters.}
    \label{tab:hyper}
    
    \resizebox{0.9\linewidth}{!}{
    \begin{tabular}{lcccccc}
    \toprule
    Dataset & num\_neighbors & num\_epoch & patience & $mem\_dim$ & $emb\_dim$ & $time\_dim$ \\
    \midrule
    Wiki & 15 & 20 & 5 & 184 & 184 & 100\\
    Review  & 15 & 10 & 3& 184 & 184 & 100\\
    Coin & 10 & 5 & 3 & 100 & 100 & 100\\
    Comment & 10 & 3 & 2 & 100 & 100 & 100\\
    \bottomrule
    \end{tabular}}
    
\end{table}

\section{Additional Experimental Results}
\label{app:exps}

\subsection{Transductive and Inductive Experiments on Previously Small and Medium Datasets}

In addition to the large-scale TGB dataset, we also conduct experiments on some traditional small and medium datasets previously used in dynamic graph link prediction. We follow TGN \citep{rossi2020temporal} to evaluate different models under both transductive and inductive settings. The transductive setting deal with the future
links between previously observed nodes in the training stage, and the inductive setting predicts link existence between unseen nodes. We compare TNCN with the aforementioned baselines with the addition of FreeDyG \citep{tian2023freedyg}, which is also a competitive method in temporal graph link prediction. The overall performance of TNCN and different models are in Table \ref{tab:trans-ind}.

From Table~\ref{tab:trans-ind} we can find that TNCN achieves \textbf{6 new SOTA} out of 7 datasets \textbf{in transductive setting} and \textbf{3 in inductive setting}, furthur uncovering the strong performance of our model.

\begin{table}[htp] 
    \begin{center}
    
    \caption{Average Precision (AP) under Transductive and Inductive settings on small and medium dataset. The best is in \textbf{bold} font, and the second is \underline{underlined}. (``Trans'' and ``Ind'' are the abbreviation for transductive and inductive respectively.)}
    \label{tab:trans-ind}

    \resizebox{\textwidth}{!}{
    \begin{tabular}{c|cccccccccccccc} 
    \toprule 
    Setting & Method & Wikipedia & Reddit & Mooc & Lastfm & Enron & Social Evo. & UCI & Avg. Rank \\
    \midrule
    \multirow{15}{*}{Trans} & CAWN & 98.62$\pm$0.05 & 98.66$\pm$0.09 & 80.15$\pm$0.25 & 86.99$\pm$0.06 & 89.56$\pm$0.09 & 84.96$\pm$0.09 & 95.18$\pm$0.06 & 8.71 \\
    & JODIE & 96.15$\pm$0.36 & 97.20$\pm$0.05 & 80.23$\pm$2.44 & 70.85$\pm$2.13 & 84.77$\pm$0.30 & 89.89$\pm$0.55 & 89.43$\pm$1.09 & 11.57 \\
    & DyRep & 95.81$\pm$0.15 & 98.00$\pm$0.19 & 81.97$\pm$0.49 & 71.92$\pm$2.21 & 82.38$\pm$3.36 & 88.87$\pm$0.30 & 65.14$\pm$2.30 & 11.86 \\
    & TGAT & 96.94$\pm$0.06 & 98.52$\pm$0.02 & 85.84$\pm$0.15 & 73.42$\pm$0.21 & 71.12$\pm$0.97 & 93.16$\pm$0.17 & 79.63$\pm$0.70 & 10.43 \\
    & NAT & 98.68$\pm$0.04 & 99.10$\pm$0.09 & 86.54$\pm$0.02 & 88.56$\pm$0.02 & 92.42$\pm$0.09 & 94.43$\pm$1.67 & 94.37$\pm$0.21 & 6.29 \\
    & TCL & 96.47$\pm$0.16 & 97.53$\pm$0.02 & 82.38$\pm$0.24 & 67.27$\pm$2.16 & 79.70$\pm$0.71 & 93.13$\pm$0.16 & 89.57$\pm$1.63 & 11.29 \\
    & DyGFormer & 99.03$\pm$0.02 & 99.22$\pm$0.01 & 87.52$\pm$0.49 & 93.00$\pm$0.12 & 92.47$\pm$0.12 & 94.73$\pm$0.01 & 95.79$\pm$0.17 & 4.64 \\
    & FreeDyG & \underline{99.26$\pm$0.01} & \underline{99.48$\pm$0.01} & 89.61$\pm$0.19 & 92.15$\pm$0.16 & 92.51$\pm$0.05 & \underline{94.91$\pm$0.01} & 96.28$\pm$0.11 & 3.14 \\
    & TPNet & \textbf{99.32$\pm$0.03} & 99.27$\pm$0.01 & \underline{96.39$\pm$0.09} & \underline{94.50$\pm$0.08} & \underline{92.90$\pm$0.17} & 94.73$\pm$0.02 & \underline{97.35$\pm$0.04} & \underline{2.21} \\
    & EdgeBank & 90.37$\pm$0.00 & 94.86$\pm$0.00 & 57.97$\pm$0.00 & 79.29$\pm$0.00 & 83.53$\pm$0.00 & 74.95$\pm$0.00 & 76.20$\pm$0.00 & 12.43 \\
    & GraphMixer & 97.25$\pm$0.03 & 97.31$\pm$0.01 & 82.78$\pm$0.15 & 75.61$\pm$0.24 & 82.25$\pm$0.16 & 93.37$\pm$0.07 & 93.25$\pm$0.57 & 9.71 \\
    & CNE-N & 99.09$\pm$0.04 & 99.22$\pm$0.01 & 94.18$\pm$0.07 & 93.55$\pm$0.12 & 92.48$\pm$0.10 & 94.60$\pm$0.03 & 96.85$\pm$0.08 & 3.64 \\
    & TGN & 98.57$\pm$0.05 & 98.70$\pm$0.03 & 89.15$\pm$1.60 & 77.07$\pm$3.97 & 86.53$\pm$1.11 & 93.57$\pm$0.17 & 92.34$\pm$1.04 & 7.57 \\
    & TNCN & 99.03$\pm$0.02 & \textbf{99.79$\pm$0.02} & \textbf{96.69$\pm$0.04} & \textbf{98.65$\pm$0.06} & \textbf{97.08$\pm$0.14} & \textbf{99.95$\pm$0.04} & \textbf{97.44$\pm$0.08} & \textbf{1.50} \\
    \midrule
    \multirow{15}{*}{Ind} & CAWN & 98.24$\pm$0.03 & 98.19$\pm$0.03 & 81.42$\pm$0.24 & 89.42$\pm$0.07 & 86.35$\pm$0.51 & 79.94$\pm$0.18 & 92.73$\pm$0.06 & 7.71 \\
    & JODIE & 94.82$\pm$0.20 & 96.50$\pm$0.13 & 79.63$\pm$1.92 & 81.61$\pm$3.82 & 80.72$\pm$1.39 & 91.96$\pm$0.48 & 79.86$\pm$1.48 & 10.00 \\
    & DyRep & 92.43$\pm$0.37 & 96.09$\pm$0.11 & 81.07$\pm$0.44 & 83.02$\pm$1.48 & 74.55$\pm$3.95 & 90.04$\pm$0.47 & 57.48$\pm$1.87 & 11.29 \\
    & TGAT & 96.22$\pm$0.07 & 97.09$\pm$0.04 & 85.50$\pm$0.19 & 78.63$\pm$0.31 & 67.05$\pm$1.51 & 91.41$\pm$0.16 & 79.54$\pm$0.48 & 10.50 \\
    & NAT & 98.55$\pm$0.09 & 98.56$\pm$0.21 & 78.16$\pm$0.01 & 85.91$\pm$0.02 & \textbf{94.94$\pm$1.15} & \underline{95.16$\pm$0.66} & 92.58$\pm$1.86 & 5.71 \\
    & TCL & 96.22$\pm$0.17 & 94.09$\pm$0.07 & 80.60$\pm$0.22 & 73.53$\pm$1.66 & 76.14$\pm$0.79 & 91.55$\pm$0.09 & 87.36$\pm$2.03 & 10.93 \\
    & DyGFormer & 98.59$\pm$0.03 & 98.84$\pm$0.02 & 86.96$\pm$0.43 & 94.23$\pm$0.09 & 89.76$\pm$0.34 & 93.14$\pm$0.04 & 94.54$\pm$0.12 & 4.71 \\
    & FreeDyG & \textbf{98.97$\pm$0.01} & \underline{98.91$\pm$0.01} & 87.75$\pm$0.62 & 94.89$\pm$0.01 & 89.69$\pm$0.17 & 94.76$\pm$0.05 & 94.85$\pm$0.10 & 3.14 \\
    & TPNet & \underline{98.91$\pm$0.01} & 98.86$\pm$0.01 & \textbf{95.07$\pm$0.26} & \underline{95.36$\pm$0.11} & 90.34$\pm$0.28 & 93.24$\pm$0.07 & \textbf{95.74$\pm$0.05} & \textbf{2.43} \\
    & EdgeBank & / & / & / & / & / & / & / & / \\
    & GraphMixer & 96.65$\pm$0.02 & 95.26$\pm$0.02 & 81.41$\pm$0.21 & 82.11$\pm$0.42 & 75.88$\pm$0.48 & 91.86$\pm$0.06 & 91.19$\pm$0.42 & 9.43 \\
    & CNE-N & 98.37$\pm$0.03 & 98.78$\pm$0.01 & \underline{91.89$\pm$0.31} & 94.64$\pm$0.12 & 89.66$\pm$0.22 & 93.29$\pm$0.37 & \underline{95.03$\pm$0.16} & 4.00 \\
    & TGN & 97.83$\pm$0.04 & 97.50$\pm$0.07 & 89.04$\pm$1.17 & 81.45$\pm$4.29 & 77.94$\pm$1.02 & 90.77$\pm$0.86 & 88.12$\pm$2.05 & 8.57 \\
    & TNCN & 98.31$\pm$0.05 & \textbf{99.07$\pm$0.02} & 91.56$\pm$0.23 & \textbf{95.74$\pm$0.50} & \underline{92.04$\pm$0.22} & \textbf{95.51$\pm$0.07} & 94.57$\pm$0.17 & \underline{2.57} \\
    \bottomrule
    \end{tabular}}
    \end{center}
    
\end{table}


\subsection{Comparison with Some Classic Heuristic Methods}
We exhibits the result between TGN with some classic heuristics and TNCN under official setting on tgbl-wiki dataset in the Table~\ref{tab:heu}. Here heuristics consist of CN~\citep{doi:10.1126/science.286.5439.509}, RA~\citep{zhou2009predicting}, AA~\citep{adamic2003friends}, PPR~\citep{page1999pagerank, jeh2003scaling} and ELPH~\citep{chamberlain2022graph}. In these heuristic methods, the heuristic statistics are concatenated with TGN embedding to obtain final predictions. From the table we can see that these basic heuristics such as CN and RA do not bring performance improvement. However, some sophisticated heuristics like graph sketching in ELPH can enhance the backbone's capability. Nevertheless, using these heuristics cannot outperform a more generalized model like our TNCN, which comprehensively takes neighborhood nodes' representations into account.

\begin{table}[htp]
\begin{center}
\caption{Comparison between TGN with heuristics and TNCN on tgbl-wiki Dataset.}
\label{tab:heu}

\resizebox{0.85\textwidth}{!}{
\begin{tabular}{lcccc}
\toprule
Model & Val MRR & Test MRR & Training Time (s) & Inference Time (s) \\ 
\midrule
TGN & 0.569 & 0.528 & 10.33 & 98.74 \\ 
TGN-CN & 0.561 & 0.505 & 12.33 & 106.21 \\ 
TGN-RA & 0.563 & 0.511 & 16.51 & 115.04 \\ 
TGN-AA & 0.565 & 0.517 & 11.42 & 115.01 \\ 
TGN-PPR & 0.521 & 0.427 & 207.01 & 327.22 \\ 
TGN-ELPH & 0.715 & 0.681 & 240.92 & 1614.86 \\ 
TNCN & 0.742 & 0.724 & 21.45 & 250.49 \\ 
\bottomrule
\end{tabular}}
\end{center}
\end{table}

\subsection{Detailed Statistics of Parameter Analysis}
\label{app:params-app}

Here we provide some results on parameter analysis of our TNCN model.  Figure \ref{fig:params-fig} (a) illustrates the model performance under different \textbf{numbers of neighbors}. Incorporating more recent neighbors can retain richer information, while it may bring about some irrelevant noise. We can find that the test mrr increases as num\_neighbors rises from a small value, but it declines after a certain threshold. In Figure \ref{fig:params-fig} (b), we examine how the \textbf{embedding and memory dimension} affect the final performance. A similar trend is observed for this parameter. On the other hand, both the training and test time consistently increase with the parameter values becoming larger. This phenomenon is revealed in the Tables \ref{tab:para1} to \ref{tab:para4}. In Table \ref{tab:para1} and \ref{tab:para2}, we show the detailed MRR and time consumption of our TNCN with different numbers of neighbors over Wiki and Coin dataset. In Table \ref{tab:para3} and \ref{tab:para4} we show how the embedding and memory dimension influence the final performance. To strike a balance between the model capability and time consumption, we finally select the intermediate and appropriate values.

\begin{figure}[htp]

    \centering
    \subfigure[num\_neighbors]{
        \includegraphics[width=0.35\linewidth]{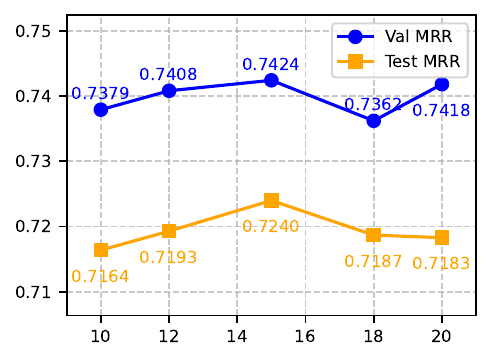}
    }
    \subfigure[emb\_dim \& mem\_dim]{
        \includegraphics[width=0.35\linewidth]{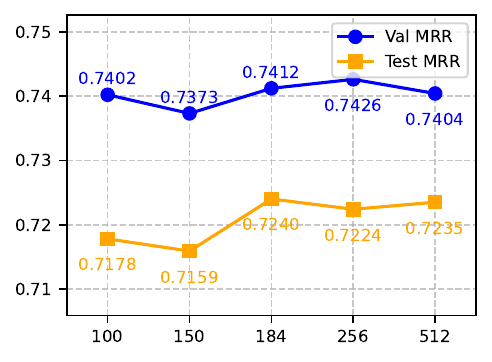}
    }
        
    \caption{Parameter Analysis on Wiki Dataset.}
    \label{fig:params-fig}
    
\end{figure}

\begin{table}[h]
    \centering
    \caption{Performance metrics for different numbers of neighbors on Wiki dataset.}
    \label{tab:para1}
    
     \resizebox{0.75\linewidth}{!}{
     \begin{tabular}{lccccc}
        \toprule
        num\_neighbors & 10     & 12     & 15     & 18     & 20     \\
        \midrule
        Val MRR            & 0.7379 & 0.7408 & 0.7412 & 0.7362 & 0.7418 \\
        Test MRR           & 0.7164 & 0.7193 & 0.7240 & 0.7187 & 0.7183 \\
        training time (s)  & 26.35  & 27.17  & 29.49  & 30.32  & 31.56  \\
        test time (s)      & 378.88 & 396.75 & 407.43 & 413.83 & 420.22 \\
        \bottomrule
    \end{tabular}
    }
\end{table}

\begin{table}[h]
    \centering
    \caption{Performance metrics for different numbers of neighbors on Coin dataset.}
    \label{tab:para2}
    
    \resizebox{0.85\linewidth}{!}{
    \begin{tabular}{lccccc}
        \toprule
        num\_neighbors & 5       & 8       & 10      & 12      & 15      \\
        \midrule
        Val MRR            & 0.7492  & 0.7378  & 0.7430  & 0.7450  & 0.7406  \\
        Test MRR           & 0.7687  & 0.7619  & 0.7701  & 0.7662  & 0.7601  \\
        training time (s)  & 5936.19 & 6129.33 & 6406.00 & 6911.00 & 7529.01 \\
        test time (s)      & 56605.45 & 57117.99 & 57292.00 & 57745.00 & 57925.00 \\
        \bottomrule
    \end{tabular}
    }
\end{table}

\begin{table}[h]
    \centering
    \caption{Performance metrics for different embedding and memory dimensions on Wiki dataset.}
    \label{tab:para3}
    
    \resizebox{0.8\linewidth}{!}{
    \begin{tabular}{lccccc}
        \toprule
        emb\_dim\&mem\_dim & 100     & 150     & 184     & 256     & 512     \\
        \midrule
        Val MRR                & 0.7402  & 0.7373  & 0.7412  & 0.7426  & 0.7404  \\
        Test MRR               & 0.7178  & 0.7159  & 0.7240  & 0.7224  & 0.7235  \\
        training time (s)      & 22.34   & 25.47   & 29.49   & 32.81   & 34.95   \\
        test time (s)          & 378.28  & 399.77  & 407.43  & 420.45  & 459.86  \\
        \bottomrule
    \end{tabular}
    }
\end{table}

\begin{table}[h]
    \centering
    \caption{Performance metrics for different embedding and memory dimensions on Coin dataset.}
    \label{tab:para4}
    
    \resizebox{0.8\linewidth}{!}{
    \begin{tabular}{lccccc}
        \toprule
        emb\_dim\&mem\_dim & 30      & 50      & 100     & 150     & 184     \\
        \midrule
        Val MRR                & 0.7387  & 0.7405  & 0.7430  & 0.7436  & 0.7518  \\
        Test MRR               & 0.7591  & 0.7606  & 0.7701  & 0.7646  & 0.7699  \\
        training time (s)      & 6228    & 6340    & 6406    & 6617    & 6721    \\
        test time (s)          & 56694   & 57179   & 57292   & 58103   & 59411   \\
        \bottomrule
    \end{tabular}
    }
\end{table}

\subsection{Comparison between TNCN and traditional NCN}
\label{app:ncn}
Here we provide a detailed comparison between our TNCN model and the traditional NCN method in Table~\ref{tab:cmp}. 
\begin{table}[htp]

\caption{The comparison of TNCN and NCN.}
\label{tab:cmp}
\centering

\resizebox{\linewidth}{!}{
\begin{tabular}{|l|l|l|l|l|}
\hline
     & temporal scenario & backbone        & arbitrary CN hops & batch-wise CN extraction        \\ \hline
NCN  &  \XSolidBrush     & traditional GNN & \XSolidBrush    & \XSolidBrush \\ \hline
TNCN & \CheckmarkBold     & memory-based    & \CheckmarkBold         & \CheckmarkBold    \\ \hline
\end{tabular}
}

\end{table}

\section{Detailed Time and Memory Consumption Statistics}

\subsection{Time Consumption}
\label{app:time}
Table \ref{tab:efficiency-4090} exhibits the detailed time consumption on TGB datasets with different models. We can observe that TNCN maintains similar time consumption to memory-based networks while achieving striking speedup compared with graph-based models. 


    


\begin{table}[htp]

\begin{center}
    \caption{Time Consumption of different methods on TGB Datasets. All these experiments are conducted with NVIDIA GeForce RTX 4090.}
    
    \label{tab:efficiency-4090}
\resizebox{\textwidth}{!}{
    \begin{tabular}{lccccc}
\toprule
Model(tr/val/test)(s) & Wiki            & Review            & Coin             & Comment             & Flight              \\ 
\midrule
TGN                   & 12/41/43      & 988/1389/1411    & 2578/5219/5408   & 6599/9984/10311    & 2990/19834/20627                \\ 
DyGFormer             & 85/6268/6317 & 3891/25831/26911  &   12593/50893/51276              &  OOM              & OOM \\ 
GraphMixer   & 27/2385/2425 & 983/14188/12764  & 5917/71699/72103  &       OOM   &            OOM   \\ 
CNE-N & 19/422/424 & 540/4749/4931 & 2308/18226/19317 & OOM & OOM \\
DyRep                 & 22/27/29        & 1514/877/916    &    6305/3821/3746   &   13701/6419/6587      &   9352/12188/13112     \\ 
TGAT                  & 81/6359/6407   & 2564/33437/34814  &  25514/73372/73821    &  OOM           &      OOM        \\ 
CAWN                  & 173/15690/15881 & 5692/79566/80717 &   OOM       &      OOM       &    OOM       \\ 
TCL                   & 31/680/682    & 948/9093/9962    &   4383/52973/54016      &     OOM    &    OOM    \\ 
TNCN          & 33/565/566      & 971/2626/2701  & 5926/54765/55177 & 9828/59439/60163    & 8209/50533/51064 \\ 
\bottomrule
\end{tabular}}  
\end{center}   

\end{table}

\noindent\textbf{Comparison between TNCN and NAT}
Here we also show some experimental results in Table~\ref{tab:nat} for the comparison between TNCN and NAT model. The hardware we use is NVIDIA GeForce RTX 2080 as NAT's code isn't compatible with higher version. Note that NAT model exposes a backward as its instability, accomplishing about only 1/3 experiments when we test it.

\begin{table}[htp]
\begin{center}
\caption{Comparison of Time Consumption between TNCN and NAT.}
\label{tab:nat}

\resizebox{0.6\textwidth}{!}{
\begin{tabular}{lcccc}
\toprule
Dataset & Model & Train (s) & Val (s) & Test (s) \\ 
\midrule
tgbl-wiki & TNCN & 21.45 & 250.49 & 251.52 \\ 
& NAT & 74.92 & 298.6 & 298.41 \\ 
\midrule
tgbl-review & TNCN & 1649 & 4788 & 4695 \\ 
& NAT & 422 & 7516 & 7461 \\ 
\midrule
tgbl-coin & TNCN & 4920 & 28716 & 28805 \\ 
& NAT & 1896 & 30398 & 30176 \\ 
\bottomrule
\end{tabular}}
\end{center}
\end{table}

\subsection{Memory Consumption}
\label{app:memory}
In Table \ref{tab:memory} we have shown the memory cost of different methods on TGB datasets. On most of them (except Review) we can observe that TNCN takes similar GPU memory consumption to memory-based networks while occupying significant less memory than graph-based models. 

\begin{table}[htp]

\begin{center}
    \caption{Memory Consumption of different methods on TGB Datasets. All these experiments are conducted with NVIDIA GeForce RTX 4090. (``-'' stands for out of memory)}
    
    \label{tab:memory}
\resizebox{0.8\textwidth}{!}{
    \begin{tabular}{lccccc}
\toprule
Inference Stage(MB) & Wiki            & Review            & Coin             & Comment             & Flight              \\ 
\midrule
TGN                   & 1050    & 6416    & 11496   & 13838    & 1182                \\ 
DyGFormer     & 3174 & 5512  &   17428              &       -              &            -         \\ 
GraphMixer   & 4914 & 5388  &   17360    &       -              &            -         \\ 
CNE-N & 1154 & 4628 & 16760 & - & - \\
DyRep                 & 998        & 6416    &    9300   &   10418      &   1126    \\ 
TGAT     & 5600   & 6588  &    18526         &      -               &         -            \\ 
CAWN        & 7656 & 10898 &     -    &      -               &      -               \\ 
TCL          & 2204    & 5276    &  17292             &       -              &           -          \\ 
TNCN          & 1420     & 9586  & 11990 & 13842    & 1188 \\ 
\bottomrule
    \end{tabular}}  
\end{center}   

\end{table}

\subsection{Overall Comparison between Performance and Time/Memory Consumption}
Here we provide a scatter plot to compare different methods between the final performance and the time/memory consumption. Figure \ref{fig:tm-cmp} (a) shows the result on Wiki, and Figure \ref{fig:tm-cmp} (b) on Coin. From these scatter plots, we can obtain a more intuitive perception that TNCN can better achieve a balance between the final performance and time/memory consumption.
\begin{figure}[htp]

    \centering
    \subfigure[Wiki]{
        \includegraphics[width=0.48\linewidth]{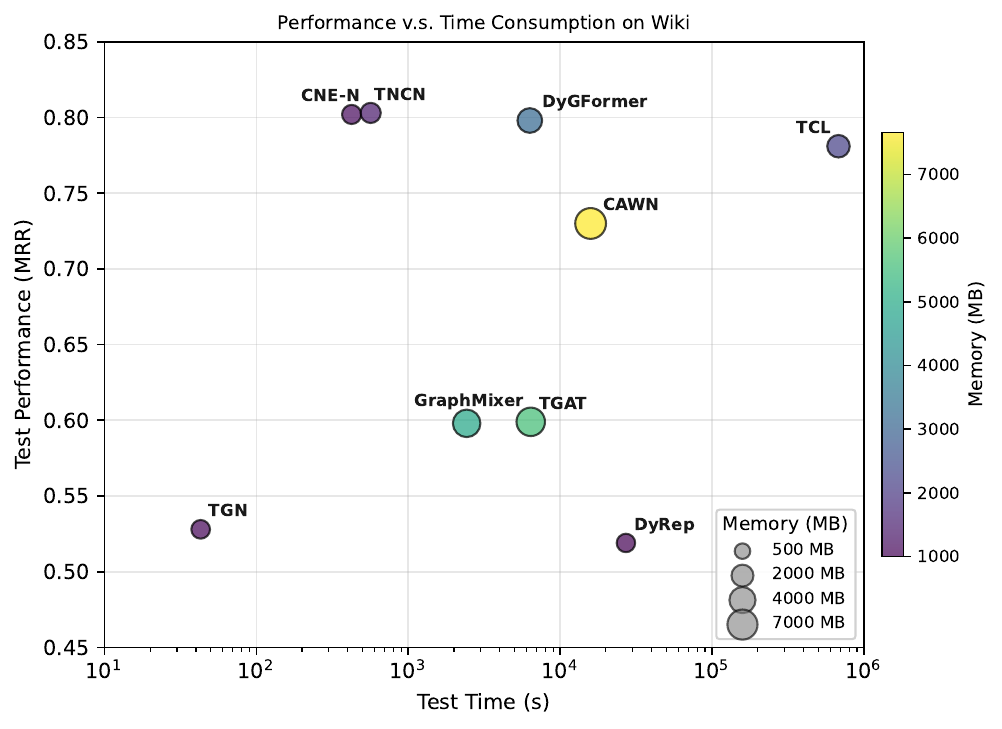}
    }
    \subfigure[Coin]{
        \includegraphics[width=0.48\linewidth]{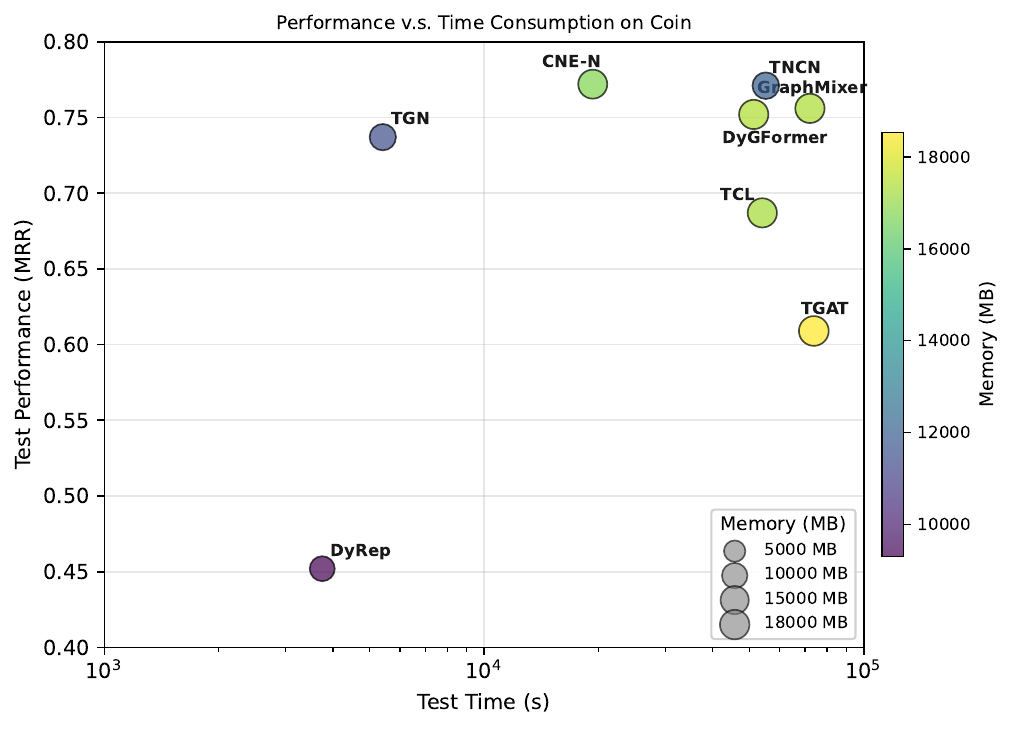}
    }
        
    \caption{Comparison between Performance and Time/Memory Consumption.}
    \label{fig:tm-cmp}
    
\end{figure}

\section{Pseudocode of TNCN Pipeline and CN Extraction}
\label{app:pseudocode}
Algorithm~\ref{alg:code} shows the pseudocode about the pipeline of our TNCN model, and Algorithm~\ref{alg:cn-extraction} for the procedure of the CN extraction operation.

\begin{algorithm}[htp]
\caption{Pipeline of TNCN}
\label{alg:code}
\begin{algorithmic}[1]
\For {positive batch data $(\text{posu}, \text{posv}, t)$}
    \State $\text{neg\_batch} \leftarrow \text{negative sampling}$
    \For {batch\_data in \{\text{pos\_data}, \text{neg\_data}\}} 
        \State $\text{mem}, \text{hist\_events} \leftarrow \text{memory\_module(batch\_data)}$;\\ \Comment{Get node memory and historical interactions}
        \State $\text{emb} \leftarrow \text{transform(mem, hist\_events)}$;\\ \Comment{Get the node embedding}
        \State $\text{CN\_mat} \leftarrow \text{CN\_extractor(hist\_events)}$;\\ \Comment{Obtain the CNs for given node pair in a batch}
        \State $\text{NCN\_emb} \leftarrow \text{AGG(emb, CN\_mat)}$;\\ \Comment{Aggregate the embeddings of CNs}
        \State $p \leftarrow \text{Pred(emb\_u, emb\_v, NCN\_emb)}$;\\ \Comment{Calculate the probability of future links}
    \EndFor    
    \State $\text{mem} \leftarrow \text{memory\_update(pos\_data)}$;\\ \Comment{Update the memory module}
\EndFor
\end{algorithmic}
\end{algorithm}

\begin{algorithm}[htp]
\caption{Procedure of CN Extraction}
\label{alg:cn-extraction}
\begin{algorithmic}[1]
\State \textbf{Input:} Temporal adjacency matrix of node $u$ and $v$, denoted as $A\in \mathbb{R}^{N \times N}$ and $B\in \mathbb{R}^{N \times N}$. $A$ and $B$ are the sub-matrix from the whole adjacency matrix of the temporal graph, containing $u$, $v$ and their individual neighbors. $N$ is the total number of their temporal neighbors and themselves.
\State \textbf{Output:} $k_{th}$ hop Common Neighbors of $u$ and $v$, $CN_k\in \mathbb{R}^{N}$.\\
\State \textbf{Stage 1:} Generate up to $k$-hop neighbors. \\
\quad \quad $A \leftarrow A + I$, $B \leftarrow B + I$. \Comment{add self-loop}\\
\quad \quad Compute $A^{k-1}$ and $A^k$, $B^{k-1}$ and $B^k$. \Comment{obtain up to $k-1$ / $k$ hop adjacency matrix}\\
\quad \quad $N_{0\sim k-1}(u) = A^{k-1}[id(u)]$,  $N_{0\sim k}(u) = A^{k}[id(u)]$. \Comment{Similar to $v$. $N(u)\in \mathbb{R}^{N}$}\\
\State \textbf{Stage 2:} Get $0\sim k-1$ and $0\sim k$ hop common neighbors. \\
\quad \quad $CN_{0\sim k-1} = N_{0\sim k-1}(u) \odot N_{0\sim k-1}(v)$, \Comment{$CN\in \mathbb{R}^{N}$}\\ \quad \quad $CN_{0\sim k} = N_{0\sim k}(u) \odot N_{0\sim k}(v)$. \Comment{perform sparse matrix hadamard product}\\
\State \textbf{Stage 3:} Get exact $k$-hop common neighbors. \\
\quad \quad $CN_{k} = CN_{0\sim k} - CN_{0\sim k-1}$. \Comment{perform sparse matrix subtraction}\\
\State Finally we get the exact $k$-hop common neighbors between node $u$ and $v$.
\end{algorithmic}
\end{algorithm}

\section{Special Cases Analysis of Common Neighbor Extraction}
\label{app:cn}


Here are some special cases while calculating $(1,2)$, $(2,1)$ and $(2,2)$ hop CNs. Under these situations, utilizing $A^k[id(u)]$ naively in step (2) will lead to walk-based neighbors, i.e. $\exists v, \exists i\neq j, w_i=w_j, \ s.t. \  (u, w_1, w_2, \cdots, w_{k-1}, v) \ exists.$ To obtain a clear version of arbitrary path-based $(i,j)$-hop CNs, we should avoid the repetition of neighbors. We take $(1,2)$ as an example to analyse the detailed method to eliminate repetition. Cases like $(2,1)$ and $(2,2)$ hop can be similarly solved. 

Assume that node $x$ is a $(1,2)$-hop CN of pair $(u,v)$, thus we know $\exists w, \ s.t.$ $(u,x)$ and $(v,w,x)$ exist. There are two variants that render $x$ to be a walk-based CN instead of a path-based one that we exactly require. 

(a) $x=v$. When $x=v$, the local graph has the topology shown in igure~\ref{fig:special} a. This situation should satisfy two conditions: $w$ is a neighbor of $v$ and there are historical interactions between $u$ and $v$. Denote the frequency between $(u,v)$ before time $t$ as $q^t(u,v) = |\{(u,v,t')|t'<t\} \cup \{(v,u,t')|t'<t\}|$. So the naively computed $CN^t_{(1,2)}(u,v)[id(x)]$ value need to be subtracted by $[\sum\limits_{w_i}q^t(w_i,v)] * q^t(u,v)$, i.e. the total interaction frequency of $v$ before time $t$ multiplied by the frequency between $(u,v)$.

(b) $w=u$. The structure is exhibited in Figure~\ref{fig:special} b. Here $(u,v)$ has historical edges and $x$ is a $1$-hop neighbor of $u$. The additive substraction value is $[\sum\limits_{x}q^t(x,u)^2] * q^t(u,v)$.

(c) Both (1a) and (1b) are satisfied. The ground truth is as Figure~\ref{fig:special} c. We just need to add back the overlap value that have been diminished once more.

\begin{figure}[tp]
    \centering
    \subfigure[$x=v$]{
        \includegraphics[width=0.20\textwidth]{fig/1-2-xv.pdf}
    }
    \subfigure[$w=u$]{
        \includegraphics[width=0.20\textwidth]{fig/1-2-wu.pdf}
    }
    \subfigure[$x=v$ \& $w=u$]{
        \includegraphics[width=0.20\textwidth]{fig/1-2-both.pdf}
    }
    \caption{Here shows the special cases related to $(1,2)$-hop CNs computation. Note that the graph is undirected, while the directed arrows implies the path direction used to determine the corresponding hop numbers.}
    \label{fig:special}
\end{figure}

Note that the procedure above can only deal with CNs of \textbf{no more than $(2,2)$-hop} perfectly. For higher-order $(i,j)$-hop CN extraction, please refer to \citet{perepechko2009number} for more details and complicated analysis.

\section{Case Study}
\subsection{Two Examples from TGB for Better Understanding TNCN's Effectivity}
In Figure~\ref{fig:case}, we show two case studies from TGB to give a better understanding of the effectivity of our TNCN.

\begin{figure}[tp]
    \centering
    \subfigure[tgbl-wiki]{
        \includegraphics[width=0.35\textwidth]{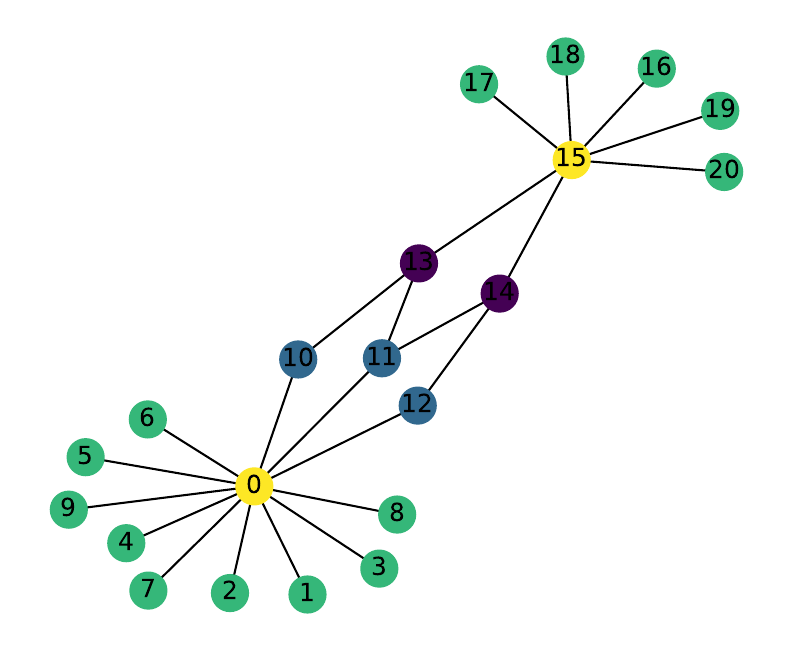}
    }
    \subfigure[tgbl-coin]{
        \includegraphics[width=0.35\textwidth]{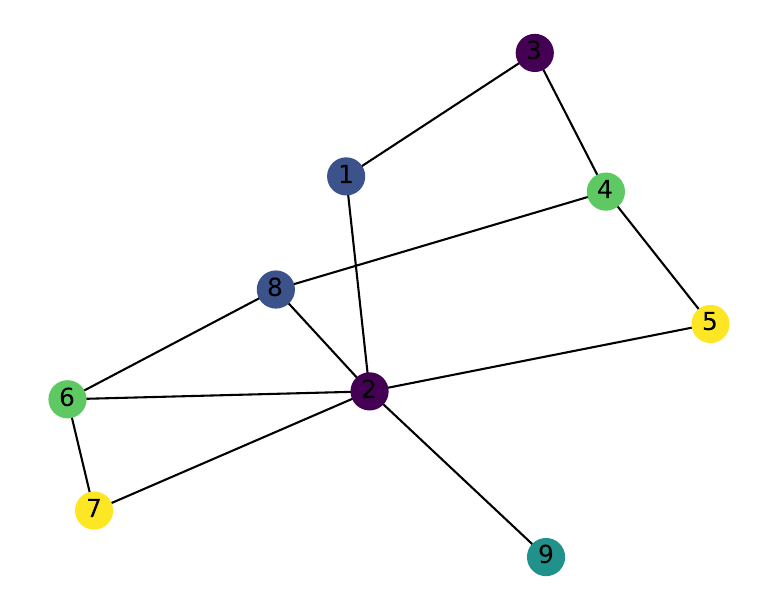}
    }
    \caption{Two case studies from TGB.}
    \label{fig:case}
\end{figure}

Figure (a) shows a case from tgbl-wiki, which is a bipartite graph. The yellow nodes 0 and 15 are a $(u,v)$ pair. If we use a node-wise method to predict the future link of $(0,15)$, we can find that node 15 has just 7 neighbors while node 0 has 12. So their properties may be different, thus having less chance to have an interaction. However TNCN can observe that the blue nodes are their $(1,2)$-hop CNs and purple nodes are the $(2,1)$-hop, and it will give a high probability over the existence of the future link.

Figure (b) shows another case from tgbl-coin. Here we need to predict the link of $(1,8)$. Here TNCN can find that these two nodes have multiple variants of common neighbors. Node 3 is their $(1,2)$-hop CN, node 4 and 6 are the $(2,1)$-hop, and node 2 is both their $(1,1)$ and $(2,1)$ hop. The $(2,2)$-hop CNs are node 5 and 7, while node 9 being a special $(2,2)$-hop CN that owns a shared 1-hop node 2. With the abundant CN information, TNCN will be more likely to predict it as a positive future edge.

\subsection{Some Statistics about Common Neighbors on Comment Dataset}
Here we show some statistics about the structural information of common neighbors on Comment dataset, which may account for the lower performance of TNCN. We have collected four variants of statistics including (1) ``s-nei/d-nei'': the number of recent neighbors (without repetition) of the positive src/dst nodes, (2) ``cn'': the number of common neighbors of the positive node pairs, and (3) ``prev'': their previous interaction times recently. All the values are the average from the test set. The statistics are shown in the following Table~\ref{tab:CN-Comment}.

\begin{table}[htp]
    \centering
    \caption{Statistics about CNs on Comment Dataset. (``s'' stands for the ``surprise'' event, ``ns'' for not-surprise; ``p'' for the case where the ground-truth event ranks 1st in our prediction, ``np'' for not.)}
    \label{tab:CN-Comment}
    \begin{tabular}{ccccc}
\toprule
 & s-nei & d-nei & cn & prev \\
\midrule
s-p & 8.43 & 9.03 & 0.87 & 0.39 \\
s-np & 8.89 & 8.78 & 0.07 & 0.02 \\
ns-p & 7.83 & 8.05 & 2.64 & 0.69 \\
ns-np & 8.76 & 8.88 & 0.17 & 0.07 \\
\bottomrule
\end{tabular}
\end{table}

From the table we can find that, although src/dst nodes from both surprise and not-surprise edges have similar number of neighbors, their common neighbors and the previous contact frequecy are distinct far away. Nodes from surprising edges always share less CNs and interaction times, making CN features less effective in the prediction. From another aspect, the successfully predicted events typically have their nodes with more CNs and historical interactions, which indicates that our TNCN model may be misled by the node pair with more CNs in the prediction.

\section{Proofs}
\label{app:proof}

In this section, we give proofs on theorems \ref{the:encoding}, \ref{the:time} and \ref{the:cn}.


\begin{theorem}
(Theorem \ref{the:encoding}. Ability of encoding $k$-hop event). Given a $k$-hop event $\{(u_i, u_{i+1}, t_{u_{i}, u_{i+1}}) \mid i \in \{0,\ldots,k-1\}, k\geq1\}$. If the node embedding of $u_0$ at time $t_{u_{0}, u_{1}}$, can be formally derived by the encoding function $\textit{Enc}(\{(u_i, u_{i+1}, t_{u_{i}, u_{i+1}}) \mid i \in \{0,\ldots,k-1\}, k\geq1\})$, then the learning method is considered capable of encoding the $k$-hop event. The following outline the encoding capabilities of different learning schemes:
\begin{itemize}
    \item Memory-based approach can encode any $k$-hop events with $k = 1$.
    \item Memory-based approach can encode any monotone $k$-hop events with arbitrary $k$.
    \item $k$-hop-subgraph-based approach can encode any $k'$-hop events with $k' \leq k$
\end{itemize}
\end{theorem}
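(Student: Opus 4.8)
The statement is an existence claim: for each of the three learning paradigms we must exhibit one legal instantiation of its learnable functions, together with a decoding map, so that $Emb(u_0,t_{u_0,u_1})$ becomes a function of the encoding $\textit{Enc}$ produced for the event. The plan is to use one uniform device throughout: instantiate every learnable function as an \emph{injective} map on the (at most countable) domain on which it is actually evaluated --- for the memory-based paradigm take $f_{\textit{mem}}$ to be tuple concatenation and $f_{\textit{emb}}$ the identity; for the subgraph paradigm take $f_{\textit{emb}}$ to be an injective encoder of attributed, timestamped subgraphs, which exists since these structures form a countable set. Injectivity means every intermediate quantity ever fed to one of these functions is recoverable from its output, so each bullet reduces to the purely combinatorial claim that the data determining $Emb(u_0,t_{u_0,u_1})$ literally appears somewhere in the computation that produces $\textit{Enc}$. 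I will fix $\textit{Enc}$ definitionally: for a memory-based method it is the updated memory $Mem(u_k,\cdot)$ of the last node after the events have been ingested in chronological order; for a subgraph method it is $Emb(u_{k'},t)=f_{\textit{emb}}(\mathcal{G}^{k}_{u_{k'},<t})$ for any query time $t$ exceeding all timestamps of the event.

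\emph{Bullets 1--2 (memory-based).} For $k=1$ the event is the single interaction $(u_0,u_1,t_{u_0,u_1})$; processing it applies $f_{\textit{mem}}$ to a tuple that contains the memory $Mem(u_0,\cdot)$ of $u_0$ from just before the event, so by injectivity that pre-event memory is recoverable from $\textit{Enc}$, and $Emb(u_0,t_{u_0,u_1})=f_{\textit{emb}}(Mem(u_0,\cdot))$ follows by applying the fixed $f_{\textit{emb}}$. For a monotone $k$-hop event, order its edges by their strictly increasing timestamps $t_1<t_2<\cdots<t_k$; this is exactly the order a memory-based method ingests them. I will show by induction on $i$ that $Mem(u_i,t_i)$ determines $Emb(u_0,t_1)$, the base case being the $k=1$ argument. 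For the step, the decisive point is \emph{monotonicity}: because $t_{i+1}>t_i$, the edge $(u_i,u_{i+1},t_{i+1})$ is ingested strictly after $(u_{i-1},u_i,t_i)$, so the memory of $u_i$ immediately before $t_{i+1}$ already incorporates the $i$-th update; any events outside the path that touch $u_i$ in the interval $(t_i,t_{i+1})$ only compose further injective updates and hence keep $Mem(u_i,t_i)$ recoverable. Forming $Mem(u_{i+1},t_{i+1})$ applies $f_{\textit{mem}}$ once more to a tuple containing that memory, so $Mem(u_{i+1},t_{i+1})$ determines $Mem(u_i,t_i)$ and, by the induction hypothesis, $Emb(u_0,t_1)$. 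If monotonicity failed the chain would break: a later edge carrying an earlier timestamp would be processed before the edge meant to feed it, and the information about $u_0$ would never propagate to $u_k$.

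\emph{Bullet 3 (subgraph-based).} Let the event be a $k'$-hop event with $k'\le k$. Since the path realizes $\text{SPD}^{t}(u_{k'},u_j)\le k'\le k$ for every node $u_j$ on it and every path edge has time $<t$, the node $u_0$ and all edges (with their timestamps) of the event lie inside $u_{k'}$'s $k$-hop temporal neighborhood, hence inside the induced subgraph $\mathcal{G}^{k}_{u_{k'},<t}$. The injective choice of $f_{\textit{emb}}$ makes $\mathcal{G}^{k}_{u_{k'},<t}$ recoverable from $\textit{Enc}=Emb(u_{k'},t)$, and from it one reads off the identity of $u_0$, its feature $x_{u_0}$ and the incident timestamps --- i.e.\ the local data that determines $Emb(u_0,t_{u_0,u_1})$ --- and applies the fixed $f_{\textit{emb}}$.

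\textbf{Main obstacle.} The delicate part is the monotone case. The induction must be phrased so the role of \emph{strict} monotonicity is transparent, and in particular so that events outside the path touching an intermediate node between $t_i$ and $t_{i+1}$ are handled explicitly (they add only recoverable layers under an injective $f_{\textit{mem}}$) rather than silently ignored; the boundary case where several path edges share a node but not a timestamp also needs care. A lesser subtlety, which I would dispatch up front, is making precise what ``$\textit{Enc}$'' and ``recovering $u_0$'s embedding'' mean in each paradigm, so that the three bullets read uniformly.
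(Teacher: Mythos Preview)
Your proposal is correct and follows essentially the same strategy as the paper: injectivity of the memory update combined with induction along the monotone path for bullets~1 and~2, and containment of the $k'$-hop event in the $k$-hop temporal subgraph for bullet~3. You are in fact more careful than the paper---you fix $\textit{Enc}$ explicitly for each paradigm, make the injectivity requirement on $f_{\textit{mem}}$ and $f_{\textit{emb}}$ precise, and handle extraneous events touching intermediate nodes between consecutive path timestamps, all of which the paper's proof leaves implicit or omits.
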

\begin{proof}
    In the following analysis, we establish the encoding efficacy of the memory-based approach. Consider a $k$-hop event with the simplifying assumption that $k=1$, which reduces the event to the tuple ${(u_0, u_1, t_{u_{0}, u_{1}})}$. By adhering to the predefined schematics of the memory-based methodology, the memory state $Mem(u_0, t_{u_{0}, u_{1}})$ is updated via the function $f_{\textit{mem}}$ such that $Mem(u_0, t_{u_{0}, u_{1}}) = \\ f_{\textit{mem}}(Mem(u_0, t’), e^{t_{u_{0}, u_{1}}}_{u_0, u_1}, t_{u_{0}, u_{1}} - t’)$. Let us denote the encoding function as $Enc = f_{\textit{mem}}(Mem(u_0, t’), \ldots)$. It is our intention to demonstrate that this memory-based framework is capable of encoding any $k$-hop event for $k = 1$. 
    
   We consider the encoding of an arbitrary monotonically increasing $k$-hop temporal event sequence within a memory-based approach. The induction principle is applied to demonstrate the capability of this approach. For the base case, $k=1$, the encoding has been shown to be feasible. Now, assume the proposition holds for a $k’$-hop event; that is, any $k’$-hop temporal sequence of monotonically increasing events can be encoded using a memory-based approach. This assumption implies that there exists an embedding function such that
   \begin{equation}
   Emb(u_0, t_{u_{0}, u_{1}}) = \textit{Enc}({(u_i, u_{i+1}, t_{u_{i}, u_{i+1}}) \mid i \in {0,\ldots,k’-1}}),
   \end{equation}
   for all event sequences with $k'$ hops, where $k’ \geq 1$. 
   Given an arbitrary $k’+1$-hop event, which can be partitioned into an initial event $(u_0, u_1, t_{u_{0}, u_{1}})$ and a subsequent $k’$-hop sequence. The existence of an encoding function for the $k’$-hop sequence assures that
       \begin{equation}
        \begin{aligned}
            Emb(u_0, t_{u_{0}, u_{1}}) &= f_{\textit{emb}}(Mem(u_{0}, t_{u_{1}, u_{2}})) \\
            &= f_{\textit{emb}}(f_{\textit{mem}}(Mem(u_{0}, t_{u_{1}, u_{2}}), \\ & \quad Mem(u_{1}, t_{u_{1}, u_{2}}), e_{u_{0}, u_{1}}^{t_{u_{0}, u_{1}}}, t_{u_{0}, u_{1}} - t_{u_{1}, u_{2}}), \\
          Mem(u_{1}, t_{u_{1}, u_{2}}) & = f_{\textit{emb}}^{-1}(Emb(u_1, t_{u_{1}, u_{2}})) \\
          & = f_{\textit{emb}}^{-1}(\textit{Enc}(\{(u_i, u_{i+1}, t_{u_{i}, u_{i+1}}) \\ & \mid i \in \{1,\ldots,k'\}, k'\geq1\})
        \end{aligned}
    \end{equation}
Subsequently, it is demonstrated that $Emb(u_0, t_{u_{0}, u_{1}})$ provides an encoding for both the initial event and the $k’$-hop sequence, thereby affirming its efficacy in encoding the entire $k’+1$-hop event. This concludes the inductive step and substantiates the inductive argument.

We consider a $k$-hop-subgraph-based approach for our analysis. It is evident that a $k$-hop subgraph encompasses any $k’$-hop events, where $k’ \leq k$. Furthermore, the aggregation methodology assimilates all nodes contained within the subgraph. Collectively, these observations substantiate the theorem in question.

\end{proof}

\begin{theorem}
    (Theorem \ref{the:time}. Learning method time complexity). Denote the time complexity of a learning method as a function of the total number of events processed during training. For a given graph $\gG$ with the number of nodes designated as $|\gN|$ and the number of edges as $|\gE|$, the following assertions hold:
    \begin{itemize}
        \item For the memory-based approach, the time complexity is $\Theta\left(|\gE|\right)$.
        \item For $k$-hop-subgraph-based with $k=1$, the lower-bound time complexity is $\Omega\left(\frac{|\gE|^2}{|\gN|}\right)$, and the upper-bound time complexity is $\gO\left(\frac{|\gE|^2}{|\gN|} + |\gE||\gN|\right)$
        \item For $k$-hop-subgraph-based with $k=2$, the upper-bound time complexity is $\gO\left((\frac{|\gE|^2}{|\gN|} + |\gE||\gN|)^{\frac{3}{2}}\right)$ 
    \end{itemize}
\end{theorem}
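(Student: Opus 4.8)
\textbf{Proof proposal for Theorem \ref{the:time}.}

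The plan is to count, for each paradigm, how many times an individual event participates in a computation over the course of training, then sum over all events. For the memory-based approach the argument is immediate: by the defining recursion, when event $(u,v,t)$ arrives it is consumed exactly once to update $Mem(u,\cdot)$ and $Mem(v,\cdot)$, and it is never revisited. Hence the total work is $\Theta(|\gE|)$, giving both the upper and lower bound simultaneously. I would state this first as a warm-up because it also fixes the cost model (``one unit of work per event touched'').

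For the $k=1$ subgraph-based case, the key observation is that event $(u,v,t)$ is re-processed once for every query whose $1$-hop temporal neighborhood contains $u$ or $v$ --- i.e., roughly once per incident edge at $u$ and once per incident edge at $v$ that occurs after $t$. Summing $\sum_{w}\deg(w)^2$-type quantities over the graph, and using convexity (Cauchy--Schwarz / power-mean: $\sum_w \deg(w)^2 \ge |\gE|^2/|\gN|$ when $\sum_w \deg(w) = 2|\gE|$), yields the lower bound $\Omega(|\gE|^2/|\gN|)$. The matching upper bound $\gO(|\gE|^2/|\gN| + |\gE||\gN|)$ comes from bounding the same sum from above (a vertex has degree at most $|\gN|$, so $\sum_w \deg(w)^2 \le |\gN|\sum_w \deg(w) = 2|\gE||\gN|$, and combining the two estimates), plus an additive $\gO(|\gE||\gN|)$ slack term accounting for the per-subgraph message-passing/encoding overhead that is linear in subgraph size rather than in the number of re-used edges. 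I would make the neighborhood-overlap counting precise by fixing an event and listing exactly which later queries re-touch it, then interchanging the order of summation.

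For $k=2$, the idea is to bootstrap: a $2$-hop subgraph around a query node is built on top of the $1$-hop neighborhoods of its $1$-hop neighbors, so the number of edges that can appear in (or be processed for) a single $2$-hop subgraph is controlled by the $k=1$ bound applied locally, and message passing on a subgraph of size $S$ costs up to $S^{3/2}$ in the worst dense regime (or, more carefully, is bounded by the number of edges in that subgraph, which is itself $\gO((|\gE|^2/|\gN| + |\gE||\gN|))$ in aggregate but can concentrate). Raising the $k=1$ bound to the power $3/2$ captures this ``subgraph of subgraphs'' blow-up and gives $\gO((|\gE|^2/|\gN| + |\gE||\gN|)^{3/2})$.

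\textbf{Main obstacle.} The delicate part is not the memory-based bound (trivial) but pinning down the constants and the exact combinatorial identity in the $k=1$ case --- specifically, arguing that the double-counting ``event $e$ is re-used $\approx \deg(u)+\deg(v)$ times'' is both an upper and a lower bound up to the stated additive $|\gE||\gN|$ term, so that the convexity inequality is tight enough to yield a genuine $\Omega$ and the degree-capping inequality yields the matching $\gO$. The $k=2$ upper bound is even looser: I expect it requires a somewhat crude ``worst-case dense subgraph'' estimate (hence the non-obvious exponent $3/2$ rather than a clean square), and making that rigorous --- i.e., justifying why $3/2$ and not $2$ --- will need care about whether one counts edge re-use or raw message-passing FLOPs on the induced $2$-hop subgraph.
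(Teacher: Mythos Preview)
Your treatment of the memory-based case and the $k=1$ case is essentially the paper's argument: both reduce the $k=1$ count to $\sum_{u} d(u)^2$ (you by counting how many later queries re-touch a fixed event, the paper by summing $\sum_{i=1}^{d(u)} i$ for the events inside the $1$-hop subgraph at $u$ --- dual viewpoints of the same double sum) and then bound that quantity. Your Cauchy--Schwarz lower bound is exactly right. For the upper bound you have a small mismatch: you try to attribute the two terms $|\gE|^2/|\gN|$ and $|\gE||\gN|$ to two separate mechanisms (edge re-use versus an additive ``encoding overhead''), whereas in the paper both terms come directly from a single known inequality on degree-squared sums due to de~Caen, namely $\sum_u d(u)^2 \le |\gE|\bigl(\tfrac{2|\gE|}{|\gN|-1} + |\gN|-2\bigr)$. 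Your naive cap $\sum_u d(u)^2 \le |\gN|\cdot 2|\gE|$ recovers only the second addend; it is valid but does not explain why the statement retains the $|\gE|^2/|\gN|$ term.

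The genuine gap is $k=2$. The exponent $3/2$ does \emph{not} come from any ``message passing on a subgraph of size $S$ costs $S^{3/2}$'' heuristic --- message passing on a subgraph with $E$ edges costs $\Theta(E)$, not $E^{3/2}$, so that line will not close. The paper instead writes the total $k=2$ work as
\[
\sum_{u\in\gN} d(u)\,\sum_{v\in N_u}\sum_{w\in N_v} d(w) \;=\; \sum_{u\in\gN} X_u\,Y_u,
\]
with $X_u=d(u)$ and $Y_u$ the inner double sum, and applies Cauchy--Schwarz: $\sum_u X_u Y_u \le (\sum_u X_u^2)^{1/2}(\sum_u Y_u^2)^{1/2}$. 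The first factor is $(\sum_u d(u)^2)^{1/2}$, already controlled by the $k=1$ upper bound; the second factor the paper relates to $\sum_u d(u)^4$ and bounds again via the $k=1$ conclusion. It is the product of a $1/2$ power and a $1$ power of the $k=1$ bound that produces the $3/2$. So the idea you are missing is: decompose the $2$-hop count as a per-vertex bilinear sum and apply Cauchy--Schwarz globally, rather than trying to cost out one $2$-hop subgraph at a time.
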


\begin{proof}
In the proposed theorem, the time complexity is denoted as the aggregate quantity of events processed throughout the training phase. The objective herein is to ascertain the precise count of such utilized events.

In the context of the memory-based methodology, it is evident that each event is utilized a singular time. Consequently, the cumulative number of events is expressed as $|\mathcal{E}|$, which infers that the time complexity adheres to the order of $\Theta\left(|\mathcal{E}|\right)$.

In the context of $k$-hop-subgraph-based algorithms wherein $k=1$, an event $(u,v,t)$ is exploited once for every incident event within the neighborhood of vertices $u$ or $v$. Without loss of generality, we focus on all events within the $1$-hop-subgraph of vertex $u$. The aggregate count of events processed is given by $\sum_{i=1}^{d(u)} i = \Theta\left( d(u)^2 \right)$, where $d(u)$ denotes the degree of vertex $u$. Consequently, the computational complexity is fundamentally proportional to $\sum_{u \in \gN} d(u)^2$. Drawing on the results of \citet{DECAEN1998245}, the lower bound on the time complexity is established as $\Omega\left(\frac{|\gE|^2}{|\gN|}\right)$, whereas the upper bound is determined as $\gO\left(\frac{|\gE|^2}{|\gN|} + |\gE||\gN|\right)$

In the context of $k$-hop-subgraph-based algorithms wherein $k=2$, we adopt similar strategy where each event$(u,v,t)$ will only be utilized once another event within the subgraph of $u$ or $v$ is firstly considered. The total number of events can be formulated as $\sum_{u \in \gN} d(u) \sum_{v \in \gN_u} \sum_{w \in \gN_v} d(w)$. Replacing $d(u)$ as $X_{i}$, $\sum_{v \in \gN_u} \sum_{w \in \gN_v}$ as $Y_{i}$, we reformulated is as $\sum_{i \in |\gN|} X_i Y_i$, satisfying $\sum_{i \in |\gN|} X_i^2 = \sum_{u \in \gN} d(u)^2$ and $\sum_{i \in |\gN|} Y_i^2 = \sum_{u \in \gN} d(u)^4$. Following Cauchy inequality and conclusions of $\sum_{u \in \gN} d(u)^2$, we got the the upper-bound time complexity is $\gO\left((\frac{|\gE|^2}{|\gN|} + |\gE||\gN|)^{\frac{3}{2}}\right)$ 
\end{proof}


\begin{theorem}
(Theorem \ref{the:cn}. Expressivity of TNCN)

\begin{enumerate}
    \item TNCN is strictly more expressive than CN, RA, and AA.
    \item TNCN is strictly more expressive than Jodie with the same dimension of time encoding, DyRep with the same aggregation function, TGAT with the same attention layers and neighbors, and TGN under identical condition for all module choices.
\end{enumerate}
\end{theorem}

\begin{proof}
(1) We first give definitions of these structural features under temporal settings. Given two nodes $u$ and $v$, the structural features before time $t$ are defined as follows:

\begin{equation}
\label{eq:cn}
    \begin{aligned}
        CN(u, v, t) & = \sum_{w \in N_{1}^{t}(u) \cap N_{1}^{t}(v)} 1, \\ 
        RA(u, v, t) & = \sum_{w \in N_{1}^{t}(u) \cap N_{1}^{t}(v)} \frac{1}{d(w)}, \\
        AA(u, v, t) & = \sum_{w \in N_{1}^{t}(u) \cap N_{1}^{t}(v)} \frac{1}{\log d(w)}
    \end{aligned}
\end{equation}
Given a node $u$, the degree of node $u$ is the number of events $e$ with an endpoint at node $u$. Without loss of generality (W.L.O.G.), we consider node $u$ as the source node, and the events are $\{ (u, v_{i}, t_{i}) \mid i \in \{0, \ldots, k-1\}, k \geq 1\}$. Each time a new event is given, the embedding of node $u$ is updated by
\begin{equation}
    mem_{u}^{t_{i}} = \textit{upd}_{src} (mem_{u}^{t_{i-1}}, \textit{msgfunc}_{src}(e_{u, v_{i}}^{t_{i}})).
\end{equation}

With the MPNN universal approximation theorem, $\textit{msgfunc}$ can be a constant function, and $\textit{upd}_{src}$ can be an addition function. Thus,
\begin{equation}
    mem_{u}^{t_{k-1}} = d(u).
\end{equation}
Then the embedding can learn arbitrary functions of node degrees, i.e.,
\begin{equation}
    emb_{u}^{t} = f(d(u)).
\end{equation}
Thus, the neural common neighbor \\ $TNCN_{1}(u, v) = \mathop{\oplus}_{w \in N_{1}^{t}(u) \cap N_{1}^{t}(v)} emb_{w}^{t}$ can express Equation \ref{eq:cn}.

Extending to situations where the common neighbor node has some features we want to learn, the traditional CN, RA, and AA cannot accommodate this. However, our TNCN can express these features, demonstrating that TNCN is strictly more expressive than CN, RA, and AA.

(2) We first prove that TNCN is strictly more expressive than TGN. As TNCN possesses the same memory-based framework as TGN to derive the $emb_u^t$, TNCN at least has the same expressivity as TGN with identical module choices. Additionally, from part (1) of this theorem we know that TNCN can capture certain heuristics such as CN where TGN fails. Hence TNCN is strictly more expressive than TGN. 

Since Jodie, DyRep and TGAT are specific instances of TGN with minor changes, we next shift our object to prove that TGN has no less expressivity than the three methods.

For TGAT, it retains only the Graph Attention mechanism for embedding functions, removing all memory and message modules present in TGN. Consequently, TGAT lacks the capacity to incorporate temporal memory updates and message passing, making it less expressive than TGN. 

Jodie differs from TGN in the memory updater and embedding function. Specifically, it utilizes an RNN rather than a GRU, although GRU is a general variant of RNN. Additionally, Jodie employs the time embedding as $emb_u^t = (1 + \Delta t \cdot w)\otimes mem_u^{t^-}$, where $w$ are learnable parameters. However, TGN's Graph Attention uses not only the memory $mem_u^{t^-}$ but also the historical edge features $e_{u,v}^{t'}$, and learns the coefficient via multi-head attention, thus achieving a higher expressivity.

DyRep also modifies the memory updater to  an RNN. Meanwhile, it changes the embedding function to \textit{Identity} and message function to \textit{Graph Attention}. Although it just swaps these two modules, the expressivity is reduced in the temporal setting. When the model attempts to make its prediction, it first generates the embeddings with the embedding function. So naturally the identity function processes less information in the $emb_u^t$ than graph attention. After obtaining the prediction result, the new messages will be calculated by the message function and fed into memory updater, where the $emb_u^t$ can not get access. Hence TGN is more expressive than DyRep, especially when the recent information can make large benefits.

In conclusion, TNCN is strictly more expressive than Jodie, DyRep, TGAT and TGN under the same condition.

\end{proof}


\end{document}